\documentclass[review]{elsarticle}
 
\usepackage[margin=3cm]{geometry}
\usepackage{graphicx,url}

\usepackage{booktabs}   
\usepackage{siunitx}
\usepackage{makeidx}
\usepackage{subcaption}
\usepackage{array}
\usepackage{graphicx}
\usepackage{amssymb}
\usepackage{graphicx}
\usepackage{enumerate} 
\usepackage{amsmath} 
\usepackage{bigstrut}
\usepackage{amsopn}
\usepackage{algorithm}
\usepackage{algorithmicx}
\usepackage{algpseudocode}
\usepackage{float}
\usepackage{bm} 
\usepackage{rotating}
\usepackage{multicol}
\usepackage{multirow} 
\usepackage{tabularx}
\usepackage{arydshln}
\usepackage{color}
\usepackage{amsthm}

 \newtheorem{definition}{Definition}

\usepackage{hyperref}
 \hypersetup{
   colorlinks,
   allcolors=blue,
   urlcolor=blue,
}
 \usepackage{color}
 \urlstyle{same}

\usepackage{blindtext}
\usepackage{scrextend}
\addtokomafont{labelinglabel}{\sffamily}

\usepackage[misc,geometry]{ifsym}  

\usepackage{lineno,hyperref}
\modulolinenumbers[5]

\DeclareMathOperator*{\argmax}{arg\,max}
\DeclareMathOperator*{\argmin}{arg\,min}

\newtheorem{thm}{Theorem}
\newtheorem{lem}{Lemma}

\journal{Information Systems}
\bibliographystyle{elsarticle-num}

\begin{document} 

\begin{frontmatter}

\title{Hierarchical clustering that takes advantage of \\both density-peak and density-connectivity}

\author[deakin]{Ye Zhu\corref{mycorrespondingauthor}}
\cortext[mycorrespondingauthor]{Corresponding author}
\ead{ye.zhu@ieee.org}
\author[kaiming]{Kai Ming Ting}
\ead{tingkm@nju.edu.cn}
\author[monash]{Yuan Jin} 
\ead{yuan.jin@monash.edu}
\author[deakin]{Maia Angelova}
\ead{maia.a@deakin.edu.au}
\address[deakin]{School of Information Technology, Deakin University, Victoria, Australia 3125}
\address[kaiming]{National Key Laboratory for Novel Software Technology, Nanjing University, Nanjing, China 210023}
\address[monash]{Faculty of Information Technology, Monash University, Victoria, Australia 3800}

\begin{abstract}
This paper focuses on density-based clustering, particularly the Density Peak (DP) algorithm and the one based on density-connectivity DBSCAN; and proposes a new method which takes advantage of the individual strengths of these two  methods to yield a density-based hierarchical clustering algorithm. We first formally define the types of clusters DP and DBSCAN are designed to detect; and then identify the kinds of distributions that DP and DBSCAN individually fail to detect all clusters in a dataset. These identified weaknesses inspire us to formally define a new kind of clusters and propose a new method called DC-HDP to overcome these weaknesses to identify clusters with arbitrary shapes and varied densities. In addition, the new method produces a richer clustering result in terms of hierarchy or dendrogram for a better understanding of cluster structures. Our empirical evaluation results show that DC-HDP produces the best clustering results on 28 datasets in comparison with 8  state-of-the-art clustering algorithms.
\end{abstract}

\begin{keyword}
Hierarchical clustering, Density peak, Density-based clustering, Density connectivity, Varied density, Local contrast.
\end{keyword} 
 
\end{frontmatter}

\section{Introduction}\label{sec:introduction}
 
Clustering is an important and useful tool in data mining and knowledge discovery. It has been widely used for partitioning instances in a dataset such that similar instances are grouped together to form a cluster \cite{han2011data}. It is the most common unsupervised knowledge discovery technique for automatic data-labelling in various areas, e.g., information retrieval, image segmentation, and pattern recognition \cite{kaufman2009finding}. Depending on the basis of categorisation, clustering methods can be divided into several kinds, e.g., partitioning clustering versus hierarchical clustering; and density-based clustering versus representative-based clustering \cite{zaki2014dataminingbook}.

Partitioning clustering methods are the simplest and most fundamental clustering methods \cite{han2011data}. They are relatively fast, and easy to understand and implement. They organise data points in a given dataset into $k$ non-overlapping partitions, where each partition represents a cluster; and each point belongs to one cluster only \cite{han2011data}. However, traditional distance-based  partitioning methods, such as $k$-means \cite{hartigan1979algorithm}  and $k$-medoids \cite{kaufman1987clustering}, which are representative-based clustering, usually cannot find clusters with arbitrary shapes \cite{aggarwal2013data}. In contrast, density-based clustering algorithms can find clusters with arbitrary sizes and shapes while effectively separating noise. Thus, this kind of clustering is attracting more research and development. 

DBSCAN \cite{ester1996density} and DENCLUE \cite{hinneburg2007denclue} are examples of an important class of density-based clustering algorithms. 
They define clusters as regions of high densities which are separated by regions of low densities. However, these algorithms have difficulty finding clusters with widely varied densities because a global density threshold is used to identify high-density regions \cite{ankerst1999optics,duan2007local,cassisi2013enhancing,HDBSCAN2013,ZHU2016983}. 

Rodriguez et al. proposed a clustering algorithm based on density peaks (DP) \cite{rodriguez2014clustering}. It identifies cluster modes\footnote{The original DP paper regards detected cluster modes as ``cluster centres'' \cite{rodriguez2014clustering}.} which have local maximum density and are well separated, and then assigns each remaining point in the dataset to a cluster mode via a linking scheme. Compared with the classic density-based clustering algorithms (e.g., DBSCAN and DENCLUE), DP has a better capability in detecting clusters with varied densities. Despite this improved capability, Chen et al. \cite{Chen2018} have recently identified a condition under which DP fails to detect all clusters with varied densities. They proposed a new measure called Local Contrast (LC) (instead of density) to enhance DP such that the resultant algorithm LC-DP is more robust against clusters with varied densities. 

It is important to note that the progression from DBSCAN or DENCLUE to DP, and subsequently LC-DP, with improved clustering performance, is achieved without formally defining the types of clusters DP and LC-DP can detect.

In this paper, we are motivated to formally define the type of clusters that an algorithm is designed to detect before investigating the weaknesses of the algorithm. This approach enables us to determine two additional weaknesses of DP; and we show that the use of LC does not overcome these weaknesses. 

This paper proposes a new clustering method which takes advantage of the individual strengths of DBSCAN and DP to yield a density-based hierarchical clustering algorithm that produces a better and richer clustering result. It makes the following contributions:

\begin{enumerate}[(i)] 
\item
Formalising a new type of clusters called $\eta$-linked clusters; and providing a necessary condition for a clustering algorithm to correctly detect all $\eta$-linked clusters in a dataset.
\item
Uncovering that DP is a clustering algorithm which is designed to detect $\eta$-linked clusters; and identifying two weaknesses of DP, i.e., the conditions under which DP cannot correctly detect all clusters in a dataset. 
\item
Introducing a different view of DP as a hierarchical procedure. 
Instead of producing flat clusters, this procedure generates a dendrogram, enabling a user to identify clusters in a hierarchical way. 
\item
Formalising the second new type of clusters called $\eta$-density-connected clusters which encompass all
$\eta$-linked clusters and the
kind of non-$\eta$-linked clusters that DP fails to detect.
\item
Proposing a density-connected hierarchical DP to overcome the identified weaknesses of DP. The new algorithm DC-HDP merges two cluster modes only if they are density-connected in the hierarchy.
\item 
Completing an empirical evaluation by comparing with 8 state-of-the-art clustering algorithms: 
 4 density-based clustering algorithms (DBSCAN \cite{ester1996density}, Mean shift clustering \cite{comaniciu2002mean},  DP \cite{rodriguez2014clustering} and LC-DP \cite{Chen2018}), 3 hierarchical clustering algorithms (OPTICS \cite{ankerst1999optics}, PHA \cite{LU20131227} and HDBSCAN \cite{Campello:2015:HDE}) and 1 graph-based spectral clustering algorithm \cite{Chen11}.
\end{enumerate}

The formal analysis of DP provides an insight into the key weaknesses of DP. This has enabled a simple and effective method (DC-HDP) to overcome the weaknesses. The proposed method takes advantage of the individual strengths of DBSCAN and DP, i.e., DC-HDP has an enhanced ability to identify all clusters of arbitrary shapes and varied densities; where neither DBSCAN nor DP has. In addition, the dendrogram generated by DC-HDP gives a richer information of  hierarchical components of clusters in a dataset than a flat partitioning provided by DBSCAN and DP. This is achieved with the same computational time complexity as in DP, having one additional parameter only which can usually be set to a default value in practice.

Since hierarchical clustering algorithms allow a user to choose a particular clustering granularity, hierarchical clustering is very popular and has been used far more than non-hierarchical clustering \cite{GILPIN201795}. Thus, DC-HDP provides a new perspective  which can be widely used in various applications. 

The rest of the paper is organised as follows: we provide an overview of density-based clustering algorithms and related work in Section \ref{sec_related_work}. Section \ref{sec_problem} formalises the $\eta$-linked clusters. Section \ref{sec:DP} uncovers that DP is an algorithm which detects $\eta$-linked clusters; and reveals two weaknesses of DP. Section \ref{DC} reiterates the definition of density-connected clusters used by DBSCAN, and states the known weakness of DBSCAN. Section \ref{DCC} presents the definition of the second new type of clusters called $\eta$-density-connected clusters. The new density-connected hierarchical clustering algorithm is proposed in Section~\ref{DCHDP}. In Section \ref{sec_result}, we empirically evaluate the performance of the proposed algorithm by comparing it with 8  other state-of-the-art clustering algorithms. Discussion and the conclusions are provided in the last two sections.

\section{Related work} 
\label{sec_related_work}

The two most representative algorithms of density-based clustering are DBSCAN \cite{ester1996density} and DENCLUE \cite{hinneburg2007denclue}. They first identify points in dense regions using a density estimator and then link neighbouring points in dense regions to form clusters. They can identify arbitrarily shaped clusters on noisy datasets. DBSCAN defines the density of a point as the number of points from the dataset that lie in its $\epsilon$-neighbourhood. A ``core'' point is a point having density higher than a threshold $MinPts$. DBSCAN visits every core point and links all core points in its $\epsilon$-neighbourhood together, until all core points are visited. Then, points which are directly/indirectly linked are grouped into the same cluster. Finally, non-core points that are in the $\epsilon$-neighbourhood of other core points, called boundary points, are linked to the nearest clusters. If a point is neither core point nor boundary point, then it is considered to be ``noise''. DENCLUE uses a Gaussian kernel estimator to estimate density for every point and applies a hill-climbing procedure to link neighbourhood points with high densities. Although DBSCAN and DENCLUE can detect clusters with varied sizes and shapes, they are sensitive to the parameter of density threshold \cite{huang2012revealing,bryant2018rnn}. In addition, they have difficulty finding clusters with widely varied densities because a global density threshold is used to identify points in high-density areas \cite{wang2009novel,ZHU2016983}.  

Many variants of DBSCAN have been attempted to overcome the weakness of detecting clusters with varied densities. OPTICS \cite{ankerst1999optics} draws a ``reachability'' plot based on  the ${K}^{th}$-nearest neighbour distance. 
In the $x$-axis of the plot, adjacent points follow close to each other such that point $p_{i}$ is the closest to $p_{i-1}$ in terms of the  ``reachability distance''\footnote{The ``reachability-distance'' of object $p_{i}$ to object $p_{i-1}$ is the greater one between the ``core distance'' of $p_{i}$ and the distance between $p_{i}$ and $p_{i-1}$. The ``core distance'' of $p_{i}$ is the minimum $\epsilon$ that makes $p_{i}$ a ``core'' object (the distance to its $\emph{K}^{th}$-nearest neighbour, $k=MinPts-1$).}. The reachability distance for each point is shown in $y$-axis. Since clusters centre normally has a higher density or lower reachability distance than the cluster boundaries, each cluster is visible as a ``valley'' in this plot. Then a hierarchical method can be used to extract different clusters. The overall clustering performance depends on the hierarchical method employed on the reachability plot.

HDBSCAN \cite{Campello:2015:HDE} is a hierarchical clustering based on DBSCAN. The idea is to produce many DBSCAN clustering outcomes through increasing density thresholds\footnote{It first builds a Minimum Spanning Tree (MST) for all points, where the weight of each edge is the mutual reachability distance and the weight for each vertex is the core distance. Then it removes edges from the MST progressively in decreasing order of weights, which is equivalent to getting many DBSCAN clustering outcomes with increasing density thresholds.}. As the density threshold increases, a cluster may shrink or be divided into smaller clusters or disappear. A dendrogram is built based on these clustering outcomes via a top-down method to yield the hierarchical cluster structure, e.g., the root is one cluster with all points and then split into different sub-clusters in following levels. To produce a flat partitioning, it extracts a set of `significant' clusters at different levels of the dendrogram via an optimisation process.  

HDBSCAN can detect clusters with varied densities because it employs different density thresholds. However, it has a bias towards low-density clusters. To separate overlapping high-density clusters, HDBSCAN needs to use a high-density threshold so that points in boundary region are treated as noise. As a result, high-density clusters can lose many cluster members when a high density level is used. 

Mean shift clustering \cite{comaniciu2002mean} is a mode-seeking algorithm based on gradient ascent, i.e.,  shifting points towards the local modes. For each point, it defines a window around it and calculate the mean of the points in the window.  Then it shifts the centre of the window to the mean and repeats this process till the algorithm stop updating.  All points shift to the same local maxima are grouped into the same cluster. The main disadvantage of mean shift is that it cannot control over the number of clusters. Furthermore, it cannot differentiate between meaningful and non-meaningful modes \cite{carreira2015review}.

\begin{figure}[!htb]
\centering
  \begin{subfigure}[b]{0.44\textwidth}
  \centering\captionsetup{width=.9\linewidth}%
    \includegraphics[width=\textwidth]{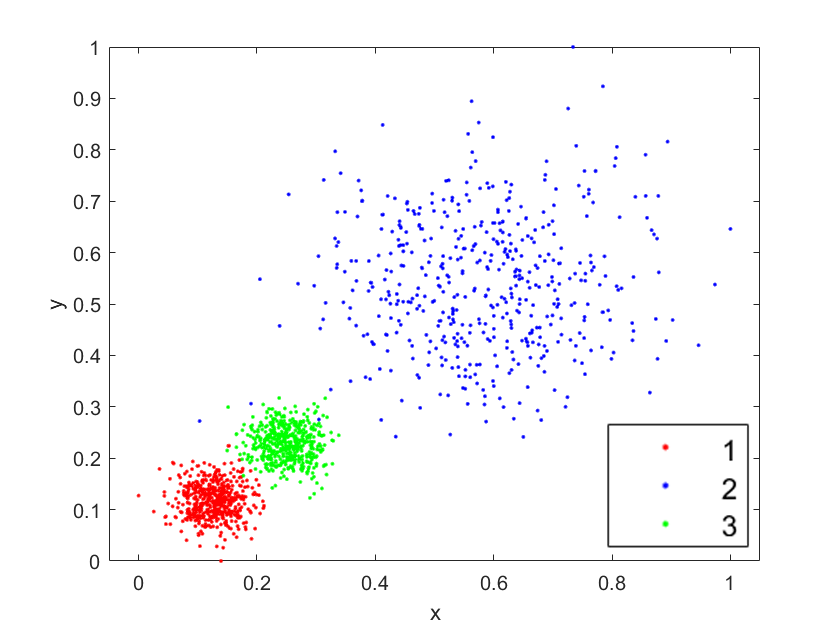}
    \caption{3G: A dataset with three Gaussians of different densities}
    \label{illu:a}
  \end{subfigure}  %
  \begin{subfigure}[b]{0.44\textwidth}
    \centering\captionsetup{width=.9\linewidth}%
    \includegraphics[width=\textwidth]{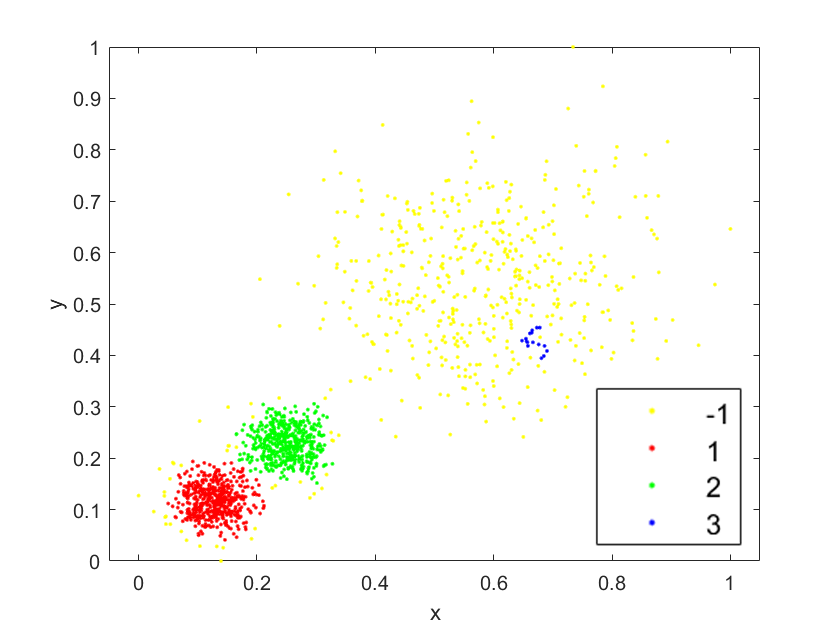}
    \caption{Clustering result of DBSCAN with the best F-measure of 0.67}
        \label{illu:b}
  \end{subfigure}\\
  \begin{subfigure}[b]{0.44\textwidth}
  \centering\captionsetup{width=.9\linewidth}%
    \includegraphics[width=\textwidth]{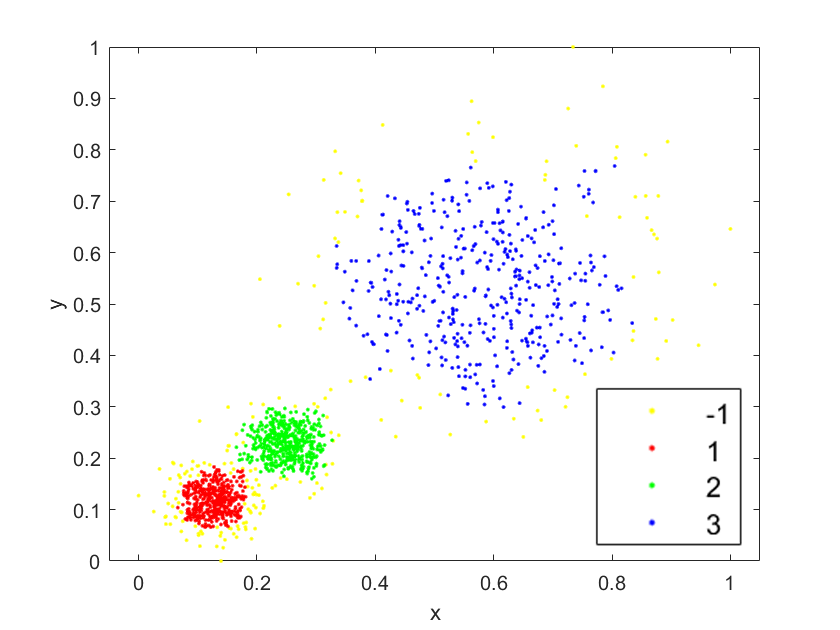}
    \caption{Clustering result of HDBSCAN with the best F-measure of 0.92}
    \label{illu:c}
  \end{subfigure}  %
  \begin{subfigure}[b]{0.44\textwidth}
    \centering\captionsetup{width=.9\linewidth}%
    \includegraphics[width=\textwidth]{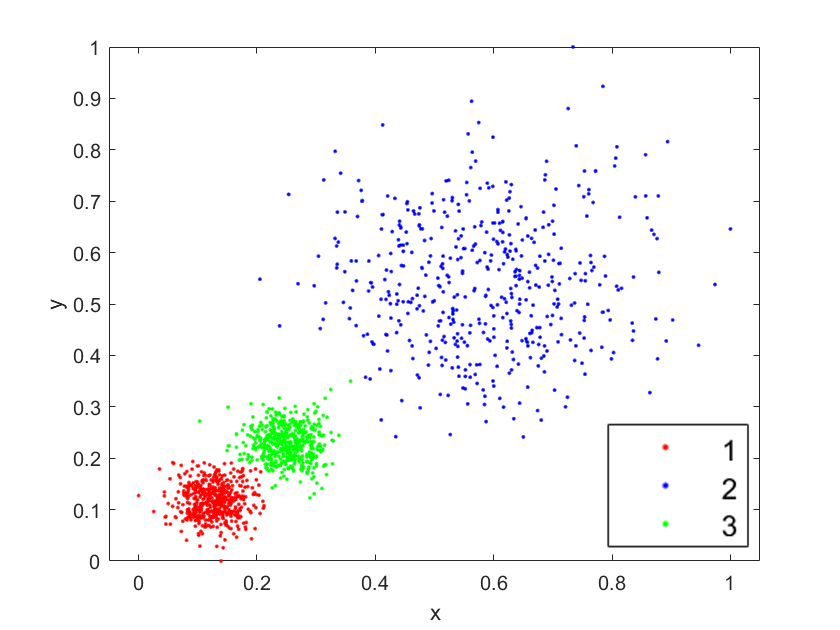}
    \caption{Clustering result of DP with the best F-measure of 0.99}
        \label{illu:d}
  \end{subfigure}
   \caption{Clustering results of different algorithms on a dataset having clusters with varied densities. ``-1'' indicates the noise assigned by the algorithm with light yellow colour. Note that in Figure (b), only the detected sparse cluster with the most assigned points is labelled as cluster 3.} 
    \label{illu}  
\end{figure}

Recently, density peak (DP) clustering algorithm was proposed without using any density threshold \cite{rodriguez2014clustering}. 
It assumes that cluster modes in the given dataset are 
sufficiently separated from each other. The clustering process has three steps as follows. First, DP calculates the density $\rho$ of each point using an $\epsilon$-neighbourhood density estimator, and the distance $\delta$ between a point and its nearest neighbour with a higher density value. Second, DP plots a decision graph for all points where the $y$-axis is $\rho \times \delta$, sorted in descending order in the $x$-axis. The points with the highest $\rho \times \delta$ (i.e., high density values and relatively far nearest neighbour with a higher density) are selected as the cluster modes. Third, each remaining point is connected to its nearest neighbour of higher density, and the points connected or transitively connected to the same cluster mode are grouped into the same cluster. Noise can be detected by applying an additional step.\footnote{Once every point is assigned to a cluster, a border region is identified for each cluster---it is the set of points which is assigned to that cluster but located within the $\epsilon$-neighbourhood of a point belonging to another cluster.  Let $\acute{\rho}$ be the highest density of all points within the border region.   Noise points are all points of the cluster which have density values less than $\acute{\rho}$.\label{foootnote_dp}} Since DP does not rely on any density threshold to extract clusters, it avoids the weakness of DBSCAN in detecting clusters with varied densities. Though both DP and mean shift clustering are mode seeking clustering, DP has clearer and meaningful definition of the mode and  mode selection criteria.     

Figure \ref{illu} illustrates clustering results of DBSCAN, HDBSCAN and DP on a dataset having clusters with varied densities. Only DP can near-perfectly detect all clusters in this dataset. DBSCAN performed poorly because it produced either a merged cluster of the two high-density clusters using a low-density threshold or only a few points are assigned to the low-density cluster when using a high density threshold. HDBSCAN can separate all three clusters but many boundary points of the high-density clusters are regarded as noise. This is because HDBSCAN has to use a high-density threshold to separate the two dense clusters and then the low-density points from the dense clusters become noise. 

DP is one of the most promising clustering algorithms which has the ability to detect clusters with varied densities and arbitrary shapes. Without formally articulating the weaknesses of DP, some papers have proposed to improve DP. For example, a fuzzy weighted $K$-nearest neighbour is used to more efficiently search for density mode \cite{XIE201619}; a linear fitting method is proposed to determine more meaningful cluster modes on the decision graph \cite{XU2016200};  an incremental method is designed to enable DP for clustering large dynamic data in industrial Internet of Things \cite{7882646}; and an adaptive $k$-nearest neighbour density estimator for accurate density estimation in a manifold space for high-dimensional data \cite{d2018automatic}. 
Only Chen et al. \cite{Chen2018} have formally identified one necessary condition under which DP fails to detect all clusters of varied densities in a dataset, and proposed a new measure called Local Contrast (LC) to make DP more robust against clusters of varied densities. 

However, to the best of our knowledge, there is no paper formalising the type of clusters that DP is designed to detect in a dataset. This knowledge is important to understand the weaknesses of DP and devise a way to overcome the weaknesses.

\section{Defining clusters based on higher-density-nearest-neighbours: \texorpdfstring{$\eta$-}{}linked clusters}
\label{sec_problem}
In this section, we formally formulate a type of clusters called $\eta$-linked clusters, where $\eta$ stands for nearest neighbour with higher density. The main symbols and notations used are listed in Table~\ref{tab:addlabel}. 

\begin{table}[!hbt]
	\centering
	\caption{Symbols and notations}
	\begin{tabular}{|ll|}
		\hline
		$x$     & a point in  $\mathbb{R}^d$ \\
		$D$    & a $d$-dimensional dataset with $n$ points \\
		$\hat{m}$     & a point with the highest density in $D$ \\
		$C$, $\widehat C$, $\widetilde C$, $\bar C$     & a cluster (a group of points) \\
		$D_{\ominus}$ & $D\setminus \{\hat{m}$\} \\
		$m$     & the mode (point of the highest density) in \\& a cluster \\
		$\rho(x)$ & density of $x$\\
		$\rho_{\epsilon}(x)$& the estimated density of $x$ via an \\& $\epsilon$-neighbourhood  estimator \\
		$dis(x,y)$   &the distance between two points $x$ and $y$ \\
		$\mathcal{N}_{\epsilon}(x)$ & $\epsilon$-neighbourhood of $x$\\
	    $\epsilon$ & radius of neighbourhood $\mathcal{N}_{\epsilon}$\\
		$\tau$   & density threshold \\
		$V_\epsilon^d$ & volume of a $d$-dimensional ball of radius $\epsilon$  \\
		$\eta_x$ &  $x$'s nearest neighbour which has a higher \\& density  than $x$\\
		$\eta_x'$ &  $x$'s nearest density-connected neighbour which \\& has a higher density  than $x$\\
		$path(x,y)$ &   an $\eta$-linked path connecting $x$ and $y$\\
		$DCpath_{\epsilon}^{\tau}(x, y) $  &   an $\eta$-density-connected  path connecting $x$ and $y$\\
		$Lpath(x,y)$ & length of $path(x,y)$ \\
		$LDCpath_{\epsilon}^{\tau}(x, y) $  & length of $DCpath_{\epsilon}^{\tau}(x, y) $\\
		$\delta(x)$ &the distance between $x$ and $\eta_x$ \\ 
		$\delta(x)'$ &the distance between $x$ and $\eta_x'$ \\ 
		$\gamma(x)$ &  $\rho(x)\times \delta(x)$ \\ 
		$\gamma(x)'$ &  $\rho(x)\times \delta(x)'$ \\ 
		\hline
	\end{tabular}%
	\label{tab:addlabel}%
\end{table}%

Let $ D=\lbrace x_{1}, x_{2}, ..., x_{n} \rbrace$, $x_{i}\in \mathbb R^{d}, x_{i} \sim F$ denote a dataset of $n$ points, each sampled independently and identically from a distribution $F$. 

Let $\mathcal{N}_\epsilon(x)$ be the $\epsilon$-neighbourhood of $x$, $\mathcal{N}_\epsilon(x)=\lbrace y \in D ~|~ dis(x,y) \leqslant \epsilon \rbrace$, where $dis(\cdot,\cdot)$ is the distance function ($s:\mathbb{R}^{d}\times \mathbb{R}^{d}\rightarrow \mathbb{R}$); and $\epsilon$ is a user-defined constant.
 
In general, the true density of $x$, i.e., ${\rho}(x)$, can be estimated via an $\epsilon$-neighbourhood estimator (as used by density-based clustering algorithms DBSCAN \cite{ester1996density} and DP \cite{rodriguez2014clustering}):

\begin{equation}
	 \rho_{\epsilon}(x)=\frac{1}{nV_{\epsilon}^d} \vert  \mathcal{N}_\epsilon(x) \vert=\frac{\vert \lbrace y \in D ~|~ dis(x,y) \leqslant \epsilon \rbrace \vert}{nV_{\epsilon}^d}
	\label{epsN}
\end{equation}

\noindent where $V_{\epsilon}^d \propto \epsilon^d$ is the volume of a $d$-dimensional sphere of radius $\epsilon$, and $n$ is the number of data points. Note that since $nV_{\epsilon}$ is a constant for every point, it can be omitted in practice. As a result, many density-based clustering algorithms use the number of points in the $\epsilon$-neighbourhood as a proxy to the estimated density.

Let $\hat{m} = \argmax_{{x} \in D} \rho( x)$ denote the point with the global maximum density; and $D_{\ominus} = D  \setminus \{\hat{m}\}$. For each point $x\in D_{\ominus}$; and let $\eta_x$ be $x$'s nearest neighbour which has a higher density than $x$, i.e., 

\begin{equation}
\displaystyle \eta_x = \argmin\limits_{y \in D, \ \rho(y) > \rho(x)} dis(x, y)
\label{eta_x}
\end{equation}

Assuming we have $k$ cluster modes $M=\{m_i, i=1,...,k\}$ (which will be defined in Definition \ref{def:cluster}). Each of $n-k$ points in $D$ is then assigned to the same cluster of its nearest neighbour of higher density; and the points, which are path-linked or transitively path-linked to the same closest cluster mode in terms of their path length, are grouped into the same cluster. Such a path is defined as follows:

\begin{definition} 
An $\eta$-linked path connecting points $x_{1}$ and $x_{p}$: $path(x_{1}, x_{p})=\{x_{1},x_{2}, x_{3},\dots,x_{p}\}$ is defined as a sequence of the smallest number of $p$ unique points starting with $x_{1}$ and ending with $x_{p}$, where $\forall i \in \{1,\dots,p-1\}, x_{i+1}=\eta_{x_{i}} = \argmin\limits_{y \in D, \ \rho(y) > \rho(x_i)} dis(x_i, y)$.
\label{path}
\end{definition}

\begin{definition} 
The length of path(x,y) is defined as 
\begin{equation}
    Lpath(x,y) = \left\{ \begin{array}{l}    
         (i) \ \ \vert path(x, y) \vert,  \text{if there exists an $\eta$-linked} \\ \qquad  \qquad \qquad \ \ \ \ \   \text{path connecting x and y} \\  
         (ii) \ \infty, \qquad \qquad   otherwise
    \end{array} \right.
\label{length}
\end{equation}
where $\vert path(x, y) \vert$ is the number of points along the path from $x$ to $y$. 

Note that $\vert path(x, y) \vert =1 $ if $x=y$; and $\vert path(x, y) \vert > 1 $, if $x \ne y$.
\end{definition}


\begin{definition}
A set of points $\widehat C_i$ is an $\eta$-linked cluster relative to a choice of  cluster modes, that is, $\widehat C_i$ is a maximal set of points having the shortest path length to its cluster mode $m_i$ wrt other cluster modes $m_j$, i.e., $\widehat C_i=\{ x\in D \ | \  \forall_{m_j\neq m_i}  \ Lpath(x,m_j)  > Lpath(x,m_i)  \}$, where $m_{i}=\arg\max_{\substack{x\in \widehat C_i}}{\rho}(x)$.
\label{def:cluster}
\end{definition}

As a result of the above definitions, three lemmas are stated as follows:

\begin{lem}
Every point in $D_{\ominus}$ has a path to $\hat{m}$ when the dataset only has one mode, i.e., $\forall_{x\in D_{\ominus}} \ 1 < Lpath(x, \hat{m}) \leqslant n $. 
\label{lem:1}
\end{lem}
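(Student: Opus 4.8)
The plan is to exploit the fact that the map $x\mapsto\eta_x$ is essentially a function on $D_{\ominus}$, and to use the density value as a strictly increasing monovariant along the sequence of iterated $\eta$-links, so that a standard finiteness argument forces the sequence to terminate at $\hat m$. Before starting I would record the (implicit) standing assumption that $\hat m$ is the \emph{unique} point of global maximum density, or equivalently that a fixed tie-breaking rule is in force; without this, a second global maximiser would have no higher-density neighbour and (\ref{eta_x}) would leave $\eta_{x}$ undefined, which is the only thing that could make the claim fail.

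First I would dispose of the trivial case $x=\hat m$: by Definition~\ref{path} we have $path(\hat m,\hat m)=\{\hat m\}$, hence $Lpath(\hat m,\hat m)=|path(\hat m,\hat m)|=1$, and $1\leqslant 1\leqslant n$. For $x\neq\hat m$ I would build the sequence $x_{1}=x$ and, as long as $x_{i}\in D_{\ominus}$, $x_{i+1}=\eta_{x_{i}}$. Three observations drive the argument: (i) whenever $x_{i}\neq\hat m$, the set $\{y\in D:\rho(y)>\rho(x_{i})\}$ is nonempty (it contains $\hat m$), so $\eta_{x_{i}}$ exists and, by (\ref{eta_x}), $\rho(x_{i+1})>\rho(x_{i})$; (ii) therefore $\rho(x_{1})<\rho(x_{2})<\cdots$ is strictly increasing, so the points $x_{i}$ are pairwise distinct; (iii) since $|D|=n$, the sequence cannot contain more than $n$ distinct points, so it must terminate, and by (i) the only way termination can occur is by reaching $x_{p}=\hat m$ for some $p\leqslant n$.

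The sequence $\{x_{1},\dots,x_{p}\}$ produced this way consists of distinct points with $x_{i+1}=\eta_{x_{i}}$ for all $i<p$ and $x_{p}=\hat m$, which is exactly an $\eta$-linked path connecting $x$ and $\hat m$ in the sense of Definition~\ref{path}; moreover, because the $\eta$-relation is functional and the $x_{i}$ are distinct, this is the minimal such path, so $Lpath(x,\hat m)=|path(x,\hat m)|=p\leqslant n$. Combined with $|path(x,\hat m)|\geqslant 1$ (it is a count of points and is $>1$ precisely because $x\neq\hat m$), this gives $1\leqslant Lpath(x,\hat m)\leqslant n$ for every $x\in D$, completing the proof.

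I do not expect a genuine obstacle here: the content is a finite-monovariant termination argument. The only point requiring care is the bookkeeping around well-definedness of $\eta_{x}$ — i.e., making explicit the uniqueness of $\hat m$ and, if the $\argmin$ in (\ref{eta_x}) is not unique, noting that any consistent choice still yields an $\eta$-linked path of size at most $n$, which is all the lemma needs.
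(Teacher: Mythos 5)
Your proof is correct, and it is essentially the argument the paper intends: the paper states Lemma~\ref{lem:1} without proof as a direct consequence of the definitions, and the justification is exactly your finite-monovariant argument --- iterating $x\mapsto\eta_x$ strictly increases $\rho$, so the iterates are pairwise distinct, the chain has at most $n$ points, and it can only stop at $\hat{m}$. Your observation that the claim silently requires $\hat{m}$ to be the \emph{unique} global density maximiser (or a tie-breaking rule), since otherwise a second maximiser has no higher-density neighbour and $\eta$ is undefined there, is a genuine and worthwhile point that the paper glosses over.
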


\begin{proof}
Since $\forall_{x\in D_{\ominus}}, \rho(\hat{m}) > \rho(x)$, $x$ will link or transitively link to $\hat{m}$ by an $\eta$-linked path. If the path include all data points, then $Lpath(x, \hat{m})=n$; if $\eta_x=\hat{m}$, then $Lpath(x, \hat{m})=1$. Thus, $1 < Lpath(x, \hat{m}) \leqslant n $
\end{proof} 
  
\begin{lem}
Given a dataset with non-overlapping clusters, i.e., each instance belongs to only one cluster, if the density at every cluster $C_i$ is a strictly  decreasing function of distance to the mode $m_i$ on any directions in $C_i$, i.e., if
$ \forall_{x,y \neq m_i  \in C_i, cos(<m_i,x>,<m_i,y>)=1}\   dis(x,m_i)   > dis(y,m_i) \to \rho(x) < \rho(y) < \rho(m_i)$, then every cluster $C_i$ is an $\eta$-linked cluster, where $cos(\cdot,\cdot)$ is the cosine similarity. 
\label{mono}
\end{lem}

\begin{proof}
When $(\forall_{z\neq m_i \in C_i}\ \rho(z) < \rho(m_i))$, there is a single cluster mode for each cluster. When $x$ and $y$ lie on the same direction from the mode $m_i$, i.e., $cos(<m_i,x>,<m_i,y>)=1$,  and  $(\forall_{x,y \in C_i}\ dis(x,m_i) > dis(y,m_i) \to \rho(x) < \rho(y))$, each point in the cluster has an $\eta$-path direction toward the cluster mode and will link the cluster mode at the end. Therefore, every cluster is an $\eta$-linked cluster.
\end{proof} 

The strictly decreasing function in Lemma \ref{mono} implies (1) an $\eta$-linked cluster has only one cluster mode with the highest density over all cluster instances; (2) the density of the cluster instance is strictly decreasing wrt the distance between the cluster instance and the cluster mode given any direction from the mode. 

Figure \ref{illud:a} illustrates an example of two clusters having strictly decreasing functions. After selecting Peak 1 and Peak 2 as cluster modes and assigning the rest of the points based on Definition~\ref{def:cluster}, the red dash-line indicates the boundary of the two clusters.

\begin{figure}[!htbp]
\centering
  \begin{subfigure}[b]{0.44\textwidth}
  \centering\captionsetup{width=.9\linewidth}%
    \includegraphics[width=\textwidth]{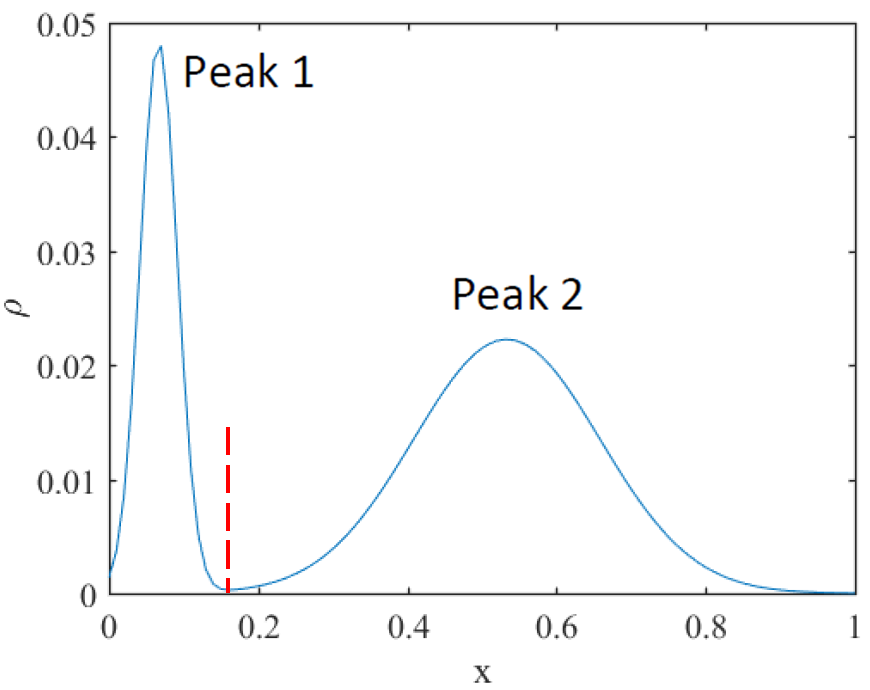}
    \caption{A distribution with two local peaks}
    \label{illud:a}
  \end{subfigure}  %
  \begin{subfigure}[b]{0.44\textwidth}
  \centering\captionsetup{width=.9\linewidth}%
    \includegraphics[width=\textwidth]{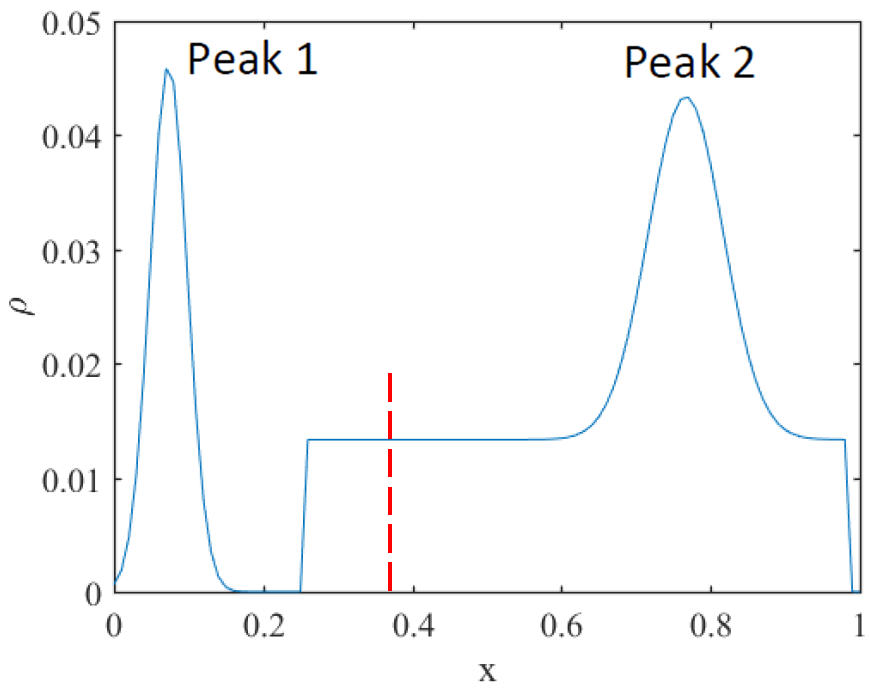}
    \caption{A distribution with two local peaks}
    \label{illud:b} 
  \end{subfigure} \\
  \begin{subfigure}[b]{0.44\textwidth}
    \centering\captionsetup{width=.9\linewidth}%
    \includegraphics[width=\textwidth]{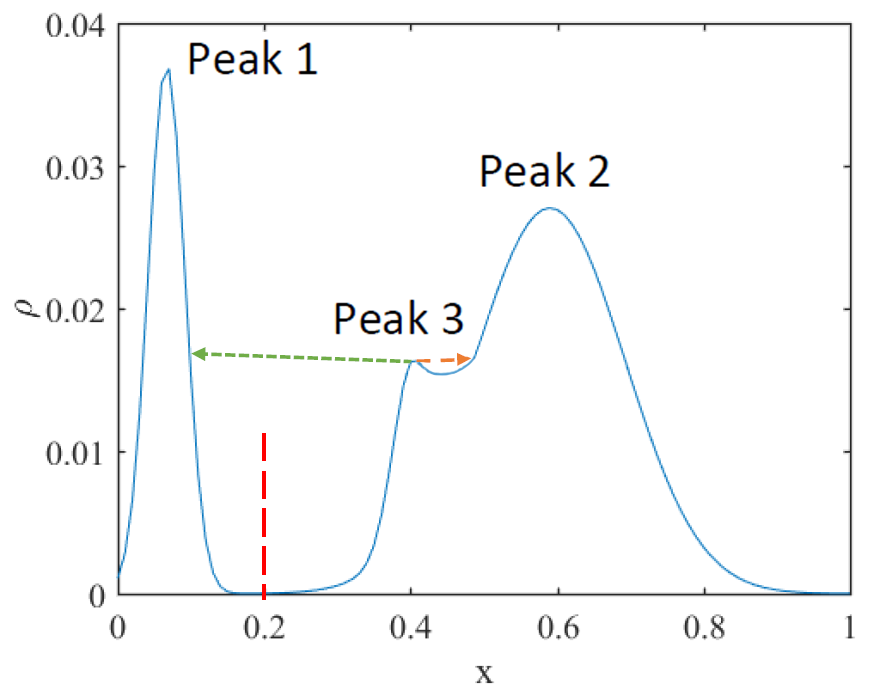}
    \caption{A distribution with three local peaks}
        \label{illud:c}
  \end{subfigure}
      \begin{subfigure}[b]{0.44\textwidth}
        \centering\captionsetup{width=.9\linewidth}%
    \includegraphics[width=\textwidth]{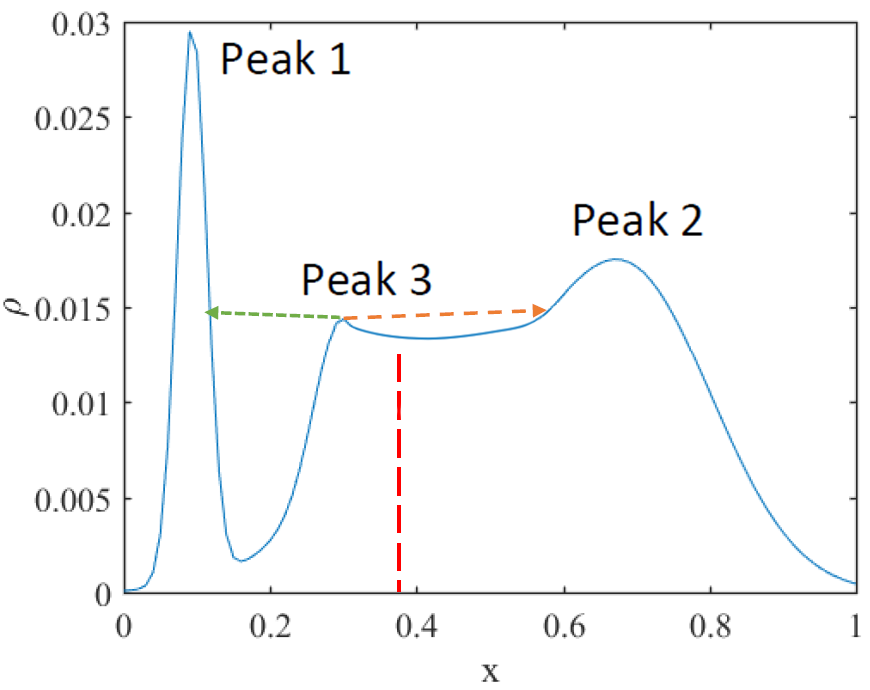}
    \caption{A distribution with three local peaks}
        \label{illud:d}
  \end{subfigure}
   \caption{Density distribution of four one-dimensional datasets. Each dataset has two clusters with different densities. (a) only contains $\eta$-linked clusters but (b), (c) and (d)  contain non-$\eta$-linked cluster clusters. In each cluster of (a), the density of instance is strictly decreasing from the cluster mode on both left and right sides. In (c) and (d), the green and orange dash-lines point to the nearest neighbour with higher density of Peak 3 from left side and right side, respectively. The red dash-line indicates the boundary of the two clusters identified by DP. }
    \label{illud} 
\end{figure}

Given a dataset with non-overlapping clusters, if the density distribution of at least one cluster $C_i$ is a non-strictly decreasing function having its only mode at $m_i$, i.e., 
$(\forall_{z\neq m_i \in C_i}\ \rho(z) < \rho(m_i)) \wedge (\forall_{x,y \in C_{i}}\ dis(x,m_i) > dis(y,m_i) \to \rho(x) \le \rho(y))$,  then $C_i$ could be non-$\eta$-linked cluster. 

This occurs when not all $x \in C_i$ has the shortest $Lpath(x,m_i)$ wrt another cluster modes. An example is shown in Figure~\ref{illud:b}, where some points to the left of the boundary are assigned to the left cluster, though they should belong to the right cluster. 
A dataset, which has clusters with uniform density distribution only, does not contain $\eta$-linked clusters for the same reason. 

In practice, a dataset having clusters with slow rates of density decrease from the modes, even if every cluster is a strictly decreasing function, can raise the same issue because a density estimator may identify multiple local peaks for such a cluster due to estimation errors. 

In addition, a dataset having clusters with multiple local peaks may not contain $\eta$-linked clusters when the local peaks do not have the shortest path to the correct modes. Figures~\ref{illud:c} and \ref{illud:d} illustrate two example density distributions with two clusters having a local peak (Peak 3) in addition to the cluster mode (Peak 2). Peak 3 in Figure~\ref{illud:c} has the shortest path to Peak 2; and it is assigned to the same cluster with Peak 2. However, Peak 3 in Figure~\ref{illud:d} has the shortest path to Peak 1. This is because its nearest point with higher density is the point which has the shortest path to Peak 1. Note that this occurs even if Peak 3 is just a tiny bump in the data distribution. 

Therefore, {\em $\eta$-linked clusters are sensitive to (i) the distance separation between clusters; and (ii) the data density distribution}. 

Based on the first two lemmas and Definition~\ref{def:cluster}, we formulate a theorem for an algorithm which identifies $\eta$-linked clusters as follows:

\begin{thm}
Given a dataset $D$ with $k$ $\eta$-linked clusters, a necessary condition for a clustering algorithm, which connect points following the $\eta$-linked path, to correctly identify all $k$ clusters is that it must correctly select the mode in each and every cluster.
\label{thm:DP}
\end{thm}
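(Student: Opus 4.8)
I would prove the contrapositive. Since ``correctly identify all $k$ clusters'' presupposes that exactly $k$ clusters are returned (a different number is trivially wrong), an algorithm implementing Definition~\ref{def:cluster} has necessarily committed to a set of $k$ candidate modes $M'=\{m'_1,\dots,m'_k\}$ and produced $\widehat C'_i=\{x\in D \mid \forall_{m'_j\neq m'_i}\ Lpath(x,m'_j) > Lpath(x,m'_i)\}$. Assume the true modes are $M=\{m_1,\dots,m_k\}$ (with $m_i=\argmax_{x\in\widehat C_i}\rho(x)$) and suppose the algorithm fails to select the mode in at least one cluster, i.e.\ $M'\neq M$; the goal is to show $\{\widehat C'_1,\dots,\widehat C'_k\}\neq\{\widehat C_1,\dots,\widehat C_k\}$.

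First I would record a \emph{self-containment} fact: $m'_i\in\widehat C'_i$ for every $i$, because $Lpath(m'_i,m'_i)=1$ whereas $Lpath(m'_i,m'_j)>1$ for $j\neq i$ (by the remark following the definition of $Lpath$, distinct endpoints give length $>1$, with the convention $\infty>1$). Hence the $k$ selected modes sit in $k$ distinct output clusters. Assuming for contradiction that the output partition equals the true partition, a pigeonhole argument then forces each true cluster $\widehat C_i$ to contain exactly one selected mode, say $\tilde m_i\in M'$. If $\tilde m_i=m_i$ for all $i$ we would get $M'=M$, contradicting our hypothesis; so there is an index $i$ with $\tilde m_i\in\widehat C_i$ but $\tilde m_i\neq m_i$.

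The crux is to exploit the one-directional nature of $\eta$-linked paths. By Definition~\ref{path} every step of a path strictly increases density, so the existence of a path from $x$ to $y\neq x$ implies $\rho(y)>\rho(x)$. Since $m_i$ is the (unique) density maximiser of $\widehat C_i$ and $\tilde m_i\in\widehat C_i\setminus\{m_i\}$, we have $\rho(\tilde m_i)<\rho(m_i)$, hence no $\eta$-linked path runs from $m_i$ to $\tilde m_i$ and $Lpath(m_i,\tilde m_i)=\infty$. Consequently $m_i$ cannot belong to the output cluster containing $\tilde m_i$: membership would require $Lpath(m_i,\tilde m_j)>Lpath(m_i,\tilde m_i)=\infty$ for the remaining selected modes, which is impossible. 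So $m_i$ and $\tilde m_i$, two points of the same true cluster $\widehat C_i$, land in different output clusters (or $m_i$ is left unassigned), and the output partition differs from $\{\widehat C_1,\dots,\widehat C_k\}$ --- the desired contradiction. The sub-case $m_i=\hat m$ is handled identically, since then $\hat m$ has no higher-density neighbour at all and $Lpath(\hat m,\tilde m_i)=\infty$ a fortiori; Lemma~\ref{lem:1} is used only to keep the global picture consistent.

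I expect the main obstacle to be the bookkeeping once a wrong mode is chosen: ruling out that some non-mode point of $\widehat C_i$ could nonetheless ``anchor'' exactly the set $\widehat C_i$. This is precisely where the strict density-monotonicity of $\eta$-paths (giving $Lpath(m_i,\tilde m_i)=\infty$) does the real work. A secondary technical point is density ties, where one must either invoke the genericity implicit in ``$m_i=\argmax$'' or fix a tie-breaking convention and observe that $\rho(\tilde m_i)\le\rho(m_i)$ still suffices to block the path $m_i\to\tilde m_i$, so the argument goes through unchanged.
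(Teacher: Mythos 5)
Your proposal is correct and rests on the same key observation as the paper's own proof, namely that $\eta$-linked paths strictly increase in density at every step, so the true mode $m_i$ (and any point denser than a wrongly selected mode) has $Lpath=\infty$ to that mode and must land in a different output cluster. The paper argues this directly via two explicit cases (a wrong point selected in a cluster, or no point selected in a cluster), whereas you absorb the second case into a pigeonhole step under a contradiction hypothesis --- a more careful formalization of essentially the same argument.
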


\begin{proof}

A violation of the condition means at least one of the two following situations will occur:
\begin{enumerate}
\item
If a point, other than the true mode, is selected by the algorithm as a cluster mode, then the cluster will be split into at least two clusters. This is because all cluster members with higher density than the selected cluster mode do not have the shortest path to this mode and will be assigned to other cluster(s). 
\item
If no point in a cluster is selected as a cluster mode, then each member of this cluster will be linked to a different cluster with a mode having the shortest path from this member. 
\end{enumerate}
Therefore, in either situation, not all clusters are identified correctly by this clustering algorithm.
\end{proof} 

\section{DP is an algorithm which identifies  \texorpdfstring{$\eta$-}{}linked clusters}
\label{sec:DP}

Density Peak or DP \cite{rodriguez2014clustering} has two main procedures as follows:
\begin{enumerate}
\item
Identifying cluster modes via a ranking scheme which aims to rank all points. The top $k$ points are selected as the modes of $k$ clusters.
\item
Linking each non-mode data point to its nearest neighbour with higher density. The points directly linked or transitively linked to the same cluster mode are assigned to the same cluster. This produces $k$ clusters at the end of the process.
\end{enumerate}

Therefore, DP is an algorithm implementing Definition \ref{def:cluster} to identify $\eta$-linked clusters in a dataset (in step 2 above).

The first step is critical, as specified in Theorem \ref{thm:DP}.  To effectively identify cluster modes, DP  assumes that different cluster modes should have a relatively large distance between them in order to detect well-separated clusters \cite{rodriguez2014clustering}. To prevent a cluster from breaking into multiple clusters when it has a slow rate of density decrease from the cluster mode, it applies a ranking scheme on all points, and then selects the top $k$ points as cluster modes. This is done as follows.

DP defines a distance function $\delta(x)$ as follows:

\begin{equation}
    \delta(x) = \left\{ \begin{array}{l}    
        (i) \ \ dis(x, \eta_x), \ \ \ \ \ \  \ \forall x  \in D_{\ominus}\\  
        (ii) \ \displaystyle\max_{y \in D} dis(x, y), \text{ if } x = \hat{m}
    \end{array} \right.
\label{delta}
\end{equation}

DP selects the top $k$ points with the highest $\gamma(x)=\rho(x) \times \delta(x)$ as cluster modes. This means that each cluster mode should have high density and be far away from other cluster modes. 

\subsection{Weaknesses of DP}

While DP generally produces a better clustering result than DBSCAN  (see the evaluation conducted by Chen et al \cite{Chen2018} reported in Appendix C of the paper), we identify two fundamental weaknesses of DP:

\begin{enumerate}[(i)] 
\item
Given a dataset of $k$ $\eta$-linked clusters, if the data distribution is such that the $k$ cluster modes are not ranked as the top $k$ points with the highest $\gamma$ values, then DP cannot correctly identify these cluster modes, as stated in Theorem~\ref{thm:DP}. The source of this weakness is the ranking scheme in step 1 of the DP procedure.

An example is a dataset having two Gaussian clusters and an elongated cluster with two local peaks (the left one is the cluster mode), as shown in Figure \ref{fig01}. DP with $k=3$ splits the elongated cluster into two sub-clusters because  the two local peaks are ranked among the top 3 points with the highest $\gamma$ values; and it merges the bottom two clusters into one, as shown in Figure \ref{fig01:a}. A better clustering result can be obtained by using $k=4$ which resulted in a correct identification of the two bottom clusters, as shown in  Figure \ref{fig01:b}. But it still splits the single top cluster into two. Note that all three clusters would be correctly identified by DP
if the three true cluster modes were pre-selected for DP using a different process. This data distribution is similar to that shown in Figure \ref{illud:c} which has valid $\eta$-linked clusters.

\begin{figure}[!htb]
  \begin{subfigure}[b]{0.44\textwidth}
  \centering\captionsetup{width=.9\linewidth}%
    \includegraphics[width=\textwidth]{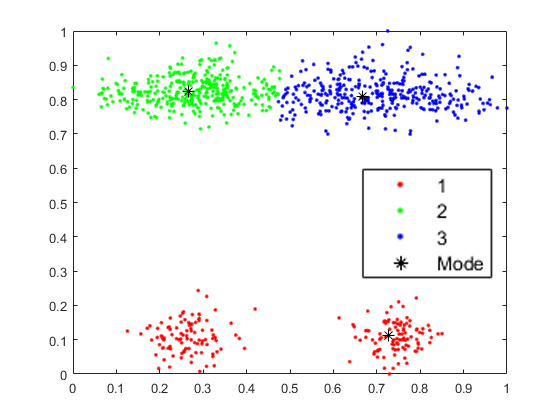}
    \caption{DP with $k=3$ yields the best clustering result by setting $\epsilon=0.1$, where  F-measure = 0.44}
    \label{fig01:a}
  \end{subfigure}  %
  \begin{subfigure}[b]{0.44\textwidth}
    \centering\captionsetup{width=.9\linewidth}%
    \includegraphics[width=\textwidth]{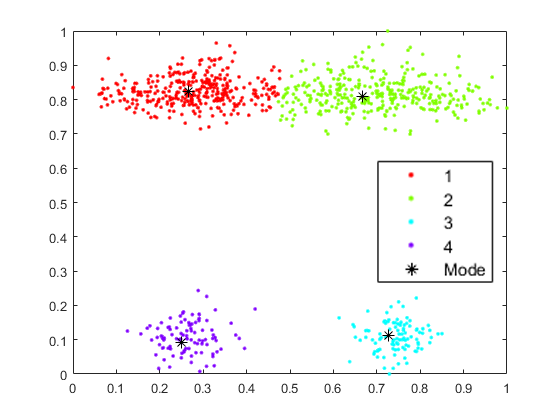}
    \caption{DP with $k=4$ yields the best clustering result by setting $\epsilon=0.1$, where  F-measure = 0.89}
        \label{fig01:b}
  \end{subfigure}
   \caption{Clustering result of DP on 3C dataset (a dataset with three clusters). The colours used in each scatter plot indicate clusters labelled by DP.}
    \label{fig01} 
\end{figure}

\begin{figure}[!htb]
  \begin{subfigure}[b]{0.44\textwidth}
  \centering\captionsetup{width=1\linewidth}%
    \includegraphics[width=\textwidth]{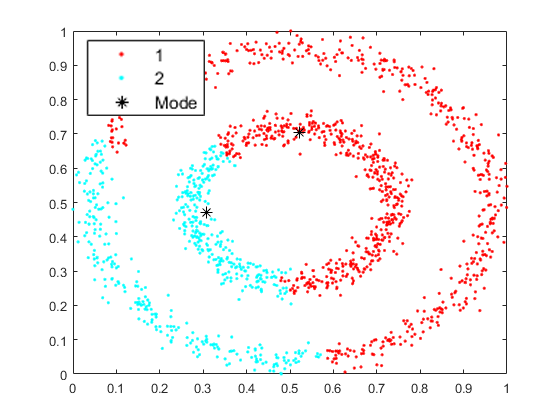}
    \caption{2O: a dataset with two circle-clusters}
    \label{fig1:a}
  \end{subfigure}  %
  \begin{subfigure}[b]{0.44\textwidth}
    \centering\captionsetup{width=1\linewidth}%
    \includegraphics[width=\textwidth]{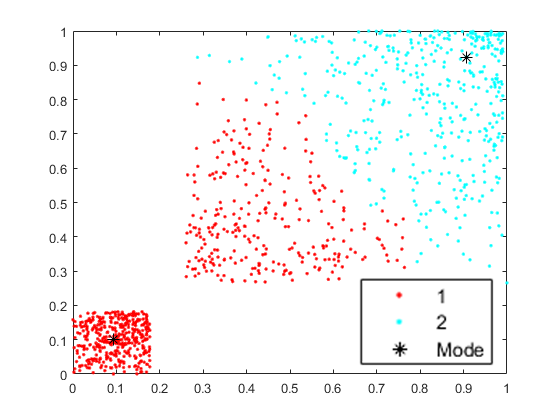}
    \caption{2Q: a dataset with two square-shaped clusters}
        \label{fig1:b}
  \end{subfigure}
   \caption{Clustering results of DP on two datasets (2O and 2Q), when $\epsilon=0.1$ and $k$ is the same as the true number of clusters. The colours used in each scatter plot indicate clusters labelled by DP.}
    \label{fig1} 
\end{figure}

\item
Given a dataset of non-$\eta$-linked clusters, DP cannot correctly identify these clusters because DP can detect $\eta$-linked clusters only. The source of this weakness is the limitation of the cluster definition used.

Figure \ref{fig1} demonstrates two examples when DP fails to correctly detect the clusters. These clusters are non-$\eta$-linked clusters, i.e., some points of a cluster do not have an $\eta$-linked path to its cluster mode, but have an $\eta$-linked path to the mode of another cluster;  even though the two clusters are well separated. Specifically, the 2O dataset in Figure \ref{fig1:a} has clusters with uniform distribution; and the data distribution of the 2Q dataset in Figure \ref{fig1:b} has small bumps on the side of the large cluster near the smaller cluster---a similar data characteristic shown in Figure \ref{illud:d}.

\end{enumerate}

\subsection{An existing improvement of DP}

Chen et al. \cite{Chen2018} provides a method called Local Contrast (LC-DP), which aims  to improve the ranking of cluster modes for detecting all clusters in a dataset with clusters of varied densities---the condition under which DP fails to correctly all clusters, i.e., the condition they have discovered.  

LC-DP uses local contrast $LC(x)$, instead of density ${\rho}(x)$, in Equation \ref{eta_x} to determine $\eta_x$. $LC(x)$ is defined to be the number of times that $x$ has a higher density than its $K$-nearest neighbours, which has values between 0 and $k$. Then the ranking is based on $LC(x)\times \delta_{LC}(x)$, where $\delta_{LC}(x)$ is the version of $\delta(x)$ based on $\eta_x$ which is determined by $LC(x)$. Chen et al. \cite{Chen2018} show that the use of LC makes DP to be more robust to clusters with varied densities.

LC-DP has the ability to enhance DP's clustering performance on clusters with varied densities \cite{Chen2018}, e.g., the 2Q and 3C datasets.
However, LC doesn't overcome the two weaknesses of DP mentioned above. For example, LC-DP still fails to identify all clusters on the 2O dataset which does not have $\eta$-linked clusters. Therefore, it is important to design a method to overcome DP's two weaknesses. Here we propose a hierarchical method based on density-connectivity with this aim in mind.

We first reiterate the currently known definitions of density connectivity and density-connected clusters in Section~\ref{DC}. Then, we define a new type of clusters based on density connectivity in the following section.

\section{Density-connected clusters}
\label{DC}

The classic density-based clustering, such as DBSCAN \cite{ester1996density}, defines a cluster based on density connectivity as follows:

\begin{definition}
Using an $\epsilon$-neighbourhood density estimator $ \rho_{\epsilon}(\cdot)$ with density threshold $\tau$,
a point $x_1$ is density connected with another point $x_p$ in a sequence of $p$ unique points from $D$, i.e., $\{x_1,x_2,x_3,...,x_p\}$:  $Connect_{\epsilon}^{\tau}(x_1, x_p)$  is defined as: 
 
\begin{equation}
\small
\begin{aligned}
    Connect_{\epsilon}^{\tau}(x_1, x_p) \leftrightarrow   \left\{ \begin{array}{l} (i) \ \text{if $p>2$:} \ \  \exists_{\{x_1,x_2,...,x_p\}} \ x_1 \triangleleft x_2 ... \triangleright x_p  \\
     (ii) \text{if $p=2$:} \ \ x_1 \triangleright x_p \vee x_1 \triangleleft x_p  
    \end{array} \right. 
\end{aligned}
\end{equation}
\noindent where $x_1 \triangleright x_p$ iff $dis(x_1,x_p)\leq \epsilon$ and $(\rho_{\epsilon}(x_1)\geq \tau)$.
\label{def:connect}
\end{definition}

\begin{definition}
A density-connected cluster $\widetilde C$, which has a mode $m=\arg\max_{\substack{x\in \widetilde C}}{\rho}(x)$, is a maximal set of points that are density connected with its mode, i.e., $\widetilde C=\{x\in D \ | \ Connect_{\epsilon}^{\tau}(x, m)\}$.  
\end{definition}

Based on the density-connectivity, we have the property that points in a density-connected cluster $\widetilde C$ are density connected to each other via the mode $m$, i.e., $\forall_{x \in \widetilde C} \ Connect_{\epsilon}^{\tau}(x, m)$.

Note that a set of points having multiple modes (of the same peak density) must be density connected together in order to form a density-connected cluster.

The key characteristic of a density-connected cluster is that the cluster can have an arbitrary shape and size \cite{ester1996density}. Although an $\eta$-linked cluster can have arbitrary shape and size as well, DP which detects $\eta$-linked clusters  has issues with the two types of data distributions, mentioned in Section \ref{sec:DP}. The $\eta$-linked path may link points from different clusters which are separated by low density regions, e.g., the two circle-clusters in Figure \ref{fig1:a}.

On the other hand, though a clustering algorithm such as DBSCAN which detects density-connected clusters does not have the above issues, DBSCAN has issues in identifying all clusters of varying densities \cite{ZHU2016983}, as shown in Figure \ref{illu:b} in Section \ref{sec_related_work}. 

In a nutshell, both the clustering algorithms designed for detecting $\eta$-linked clusters and the density-connected clusters have different limitations.

\section{\texorpdfstring{$\eta$-}{}density-connected clusters}
\label{DCC}

To overcome the limitations of (i) $\eta$-linked clusters stated in Sections \ref{sec_problem} and  \ref{sec:DP}; and (ii) density-connected clusters stated in Section \ref{DC}, we strengthen the $\eta$-linked path based on the density connectivity as follows:

\begin{definition} 
An $\eta$-density-connected  path linking points $x_{1}$ and $x_{p}$, $DCpath_{\epsilon}^{\tau}(x_{1}, x_{p})=\{x_1,x_{2},x_{3}..,x_{p}\}$, is defined as a sequence of the smallest number of $p$ unique points starting with $x_{1}$ and ending with $x_{p}$ 
such that $\forall_{i\in{\{1,...,p-1\}}} \ x_{i+1}= \eta'_{x_{i}}$, where $\eta_x'$ is $x$'s nearest density-connected neighbour which has a higher density than $x$, i.e.,
\begin{equation}
\eta'_{x}=\argmin\limits_{y \in D, \ \rho(y) > \rho(x),\  Connect_{\epsilon}^{\tau}(x,y)} dis(x, y)
\label{eta'}
\end{equation}
\end{definition}

\begin{definition} 
The length of $DCpath_{\epsilon}^{\tau}(x,y)$ is defined as 
\begin{equation}
    LDCpath_{\epsilon}^{\tau}(x, y) = \left\{ \begin{array}{l}    
        (i) \ \ \vert DCpath_{\epsilon}^{\tau}(x, y) \vert,  \text{if there exists a}\\ \ \quad \text{ $DCpath_{\epsilon}^{\tau}$ linking $x$ and $y$} \\  
        (ii) \ \infty, \qquad  \qquad \qquad  otherwise
    \end{array} \right.
\label{length2}
\end{equation} 
\end{definition} 
Note that $\vert DCpath_{\epsilon}^{\tau}(x, y) \vert =1 $ if $x=y$ and $\vert DCpath_{\epsilon}^{\tau}(x, y) \vert > 1 $, if $x \ne y$. 


\begin{definition}
 An $\eta$-density-connected cluster $\bar C_i$, which has only one mode $m_{i}=\arg\max_{\substack{x\in \bar C_{i}}}{\rho}(x)$, is a maximal set of density-connected points having the shortest density-connected path to its cluster mode $m_i$ wrt other cluster modes in terms of the path length, i.e., $\bar C_i=\{ x\in D \ | \  Connect_{\epsilon}^{\tau}(x, m_i) \wedge \forall_{m_j\neq m_i}  \ LDCpath_{\epsilon}^{\tau}(x,m_j)  > LDCpath_{\epsilon}^{\tau}(x,m_i) \}$.
 \label{def:DCHDP}
\end{definition}


Based on these definitions, we have that an $\eta$-linked cluster becomes an $\eta$-density-connected cluster if all points in the $\eta$-linked cluster are density-connected. In addition, if a dataset has only density-connected clusters and each cluster has only one mode, then all clusters are $\eta$-density-connected clusters.

It is worth mentioning that an $\eta$-linked cluster can be a density-connected cluster, providing the all $\eta$-linked paths in the cluster are $\eta$-density-connected path, i.e, all points are density-connected to the cluster mode. A density-connected cluster can be an $\eta$-linked cluster, providing each point in the cluster has an $\eta$-linked path to the cluster mode.

With proper neighbourhood threshold $\epsilon$ and density threshold $\tau$, well-separated clusters cannot be linked together as an $\eta$-density-connected cluster. This enables us to identify  clusters which are density-connected but are not $\eta$-linked in a dataset. Figure \ref{illud2} illustrates the cluster boundaries of two clusters after selecting Peak 1 and Peak 2 as cluster modes and assigning the rest of the points based on Definition \ref{def:DCHDP}. It can be seen that all these clusters can be identified as $\eta$-density-connected clusters now, although the clusters in Figure \ref{illud2:c} and Figure \ref{illud2:d} are not $\eta$-linked clusters. 

 \begin{figure}[!htb]
  \begin{subfigure}[b]{0.44\textwidth}
  \centering\captionsetup{width=.9\linewidth}%
    \includegraphics[width=\textwidth]{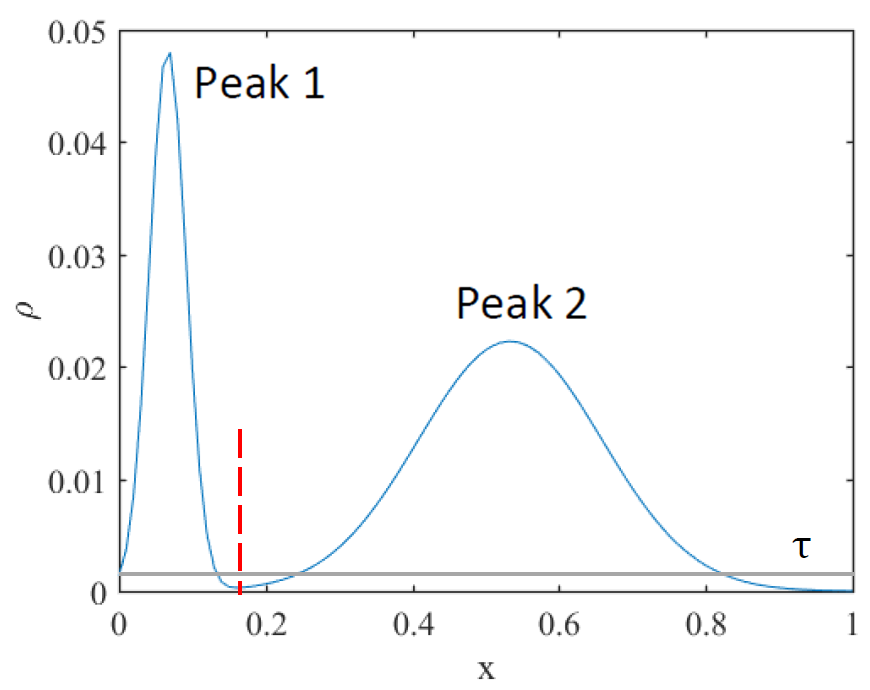}
    \caption{A distribution with two local peaks}
    \label{illud2:a}
  \end{subfigure}  %
  \begin{subfigure}[b]{0.44\textwidth}
  \centering\captionsetup{width=.9\linewidth}%
    \includegraphics[width=\textwidth]{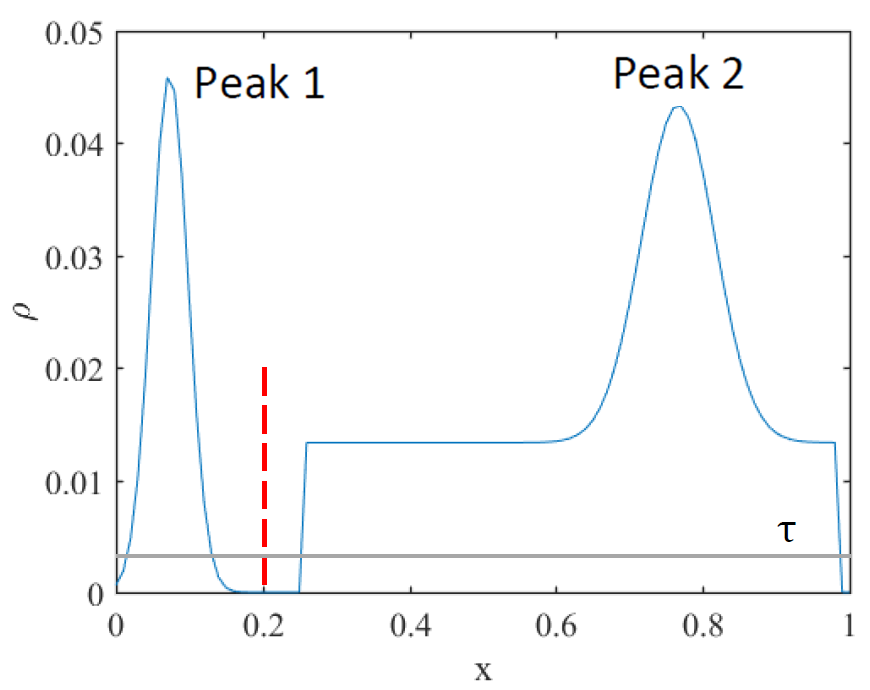}
    \caption{A distribution with two local peaks}
    \label{illud2:b} 
  \end{subfigure} \\
  \begin{subfigure}[b]{0.44\textwidth}
    \centering\captionsetup{width=.9\linewidth}%
    \includegraphics[width=\textwidth]{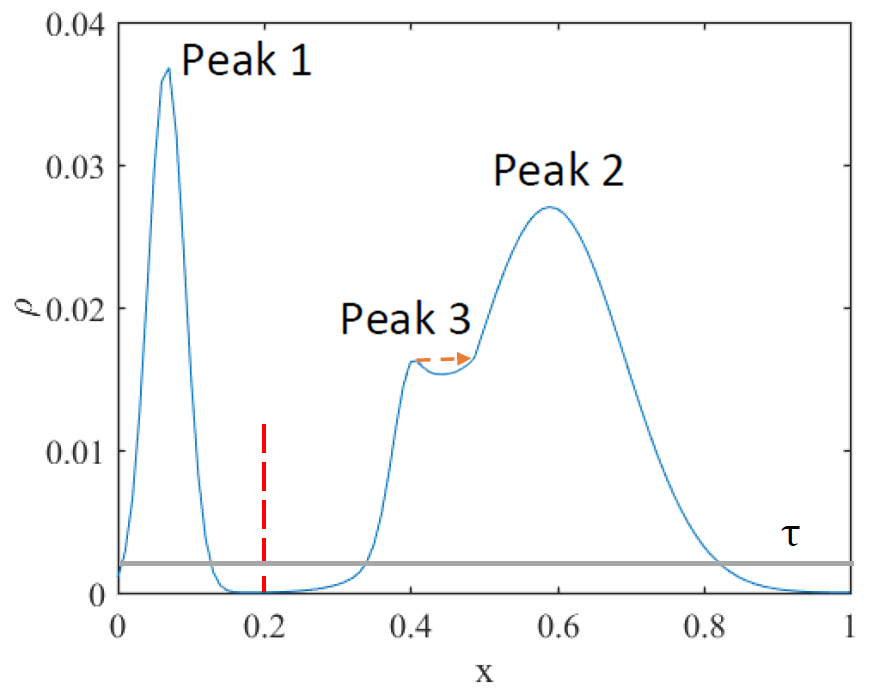}
    \caption{A distribution with three local peaks}
        \label{illud2:c}
  \end{subfigure}
      \begin{subfigure}[b]{0.44\textwidth}
        \centering\captionsetup{width=.9\linewidth}%
    \includegraphics[width=\textwidth]{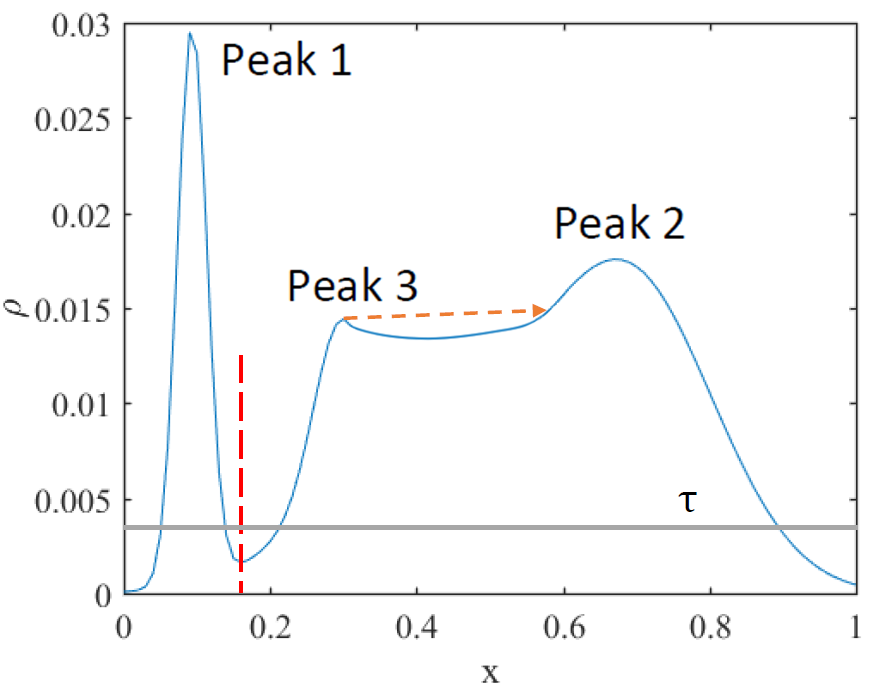}
    \caption{A distribution with three local peaks}
        \label{illud2:d}
  \end{subfigure}
   \caption{Density distribution of four one-dimensional datasets illustrating $\eta$-density-connected clusters. Parameter $\tau$ is the density threshold used for density-connectivity check. The example in (d) shows that all points which are $\eta$-density-connected to Peak 1 are not density-connected with any points which are $\eta$-density-connected to Peak 3. The path between Peak 1 and Peak 3 is disconnected by the $\tau$ setting shown in (d). In (c) and (d), the orange dash-lines point to the nearest density-connected neigbour with higher density of Peak 3. The red dash-line indicates the boundary of the two clusters.}
    \label{illud2} 
\end{figure}

\section{An \texorpdfstring{$\eta$-}{}density-connected  hierarchical clustering algorithm}
\label{DCHDP}
 
Here we propose an $\eta$-density-connected  hierarchical clustering algorithm. It is described in two subsections. In Section~\ref{sec_hierarchical_DP}, we introduce a different view of the (flat) DP as a hierarchical procedure.
Instead of employing a decision graph to rank points proposed in the original DP paper \cite{rodriguez2014clustering}, the proposed hierarchical procedure merges clusters bottom-up to produce a dendrogram. The dendrogram enables a user to identify clusters in a hierarchical way, which cannot be produced by the current flat DP procedure. 

In Section \ref{sec_DC-HDP}, we describe how the hierarchical DP procedure is modified to identify $\eta$-density-connected clusters based on Definition \ref{def:DCHDP}.

\subsection{A different view of DP: a hierarchical procedure}
\label{sec_hierarchical_DP} 
We show that the DP clustering \cite{rodriguez2014clustering} can be accomplished as a hierarchical clustering; and the two clustering procedures produce exactly the same flat clustering result when the same $k$ is used. If we run DP $n$ times by setting $k=n,n-1,...,1$, we get a bottom-up based clustering result. To avoid running DP $n$ times, which has the time complexity of $\mathcal{O}(dn^{3})$, we propose a hierarchical procedure as follows. 

The initialisation step in the hierarchical DP is as follows. After calculating $\gamma$ for all points, let every point $x \in D$ be a cluster mode (which is equivalent to running DP with $k=n$); and each cluster mode is tagged with its $\gamma$.  Let $\mathcal D$ be  $D \setminus \hat{m}$ which is the set used for merging in the next step. 

The first merging of two clusters (which is equivalent to running DP with $k=n-1$) is conducted as follows. Select the cluster having the mode point $z$ with the smallest $\gamma$ value in $\mathcal D$; and the cluster is merged with the cluster having $\eta_{z}$. $z$ is then removed from $\mathcal D$. The above merging process is repeated iteratively by merging two clusters at each iteration until $\mathcal D = \emptyset$.

Figure \ref{fig2} illustrates the  clustering results produced from the hierarchical DP as  dendrograms on the three datasets used in Figures \ref{fig01} and \ref{fig1}. Figure \ref{fig2:a} shows that the elongated cluster is split at the top level in the dendrogram. The dendrogram in Figure \ref{fig2:c} shows that points from the two circles are (incorrectly) merged at low levels in the hierarchical structure. Figure \ref{fig2:e} illustrates that many points from the sparse-and-large cluster are linked to the dense-and-small cluster when $\gamma \approx 0.1$. The flat clustering results extracted from the dendrograms produced by the hierarchical DP are the same as those produced by the flat DP shown in Figures~\ref{fig01} and \ref{fig1}.

\begin{figure}[!htbp]
  \begin{subfigure}[b]{0.44\textwidth}
  \centering\captionsetup{width=.9\linewidth}%
    \includegraphics[width=\textwidth]{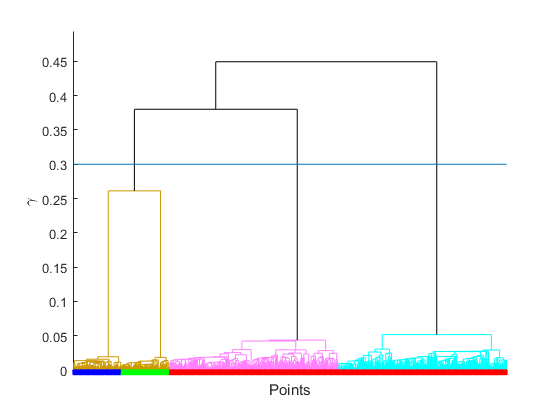}
    \caption{Dendrogram on the 3C dataset}
    \label{fig2:a}
  \end{subfigure}  %
  \begin{subfigure}[b]{0.44\textwidth}
    \centering\captionsetup{width=.9\linewidth}%
    \includegraphics[width=\textwidth]{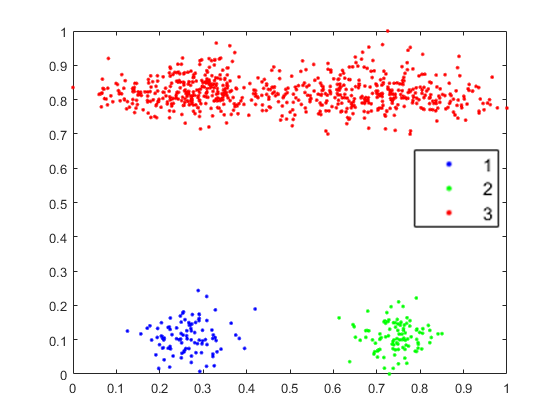}
    \caption{The 3C dataset with true labels}
        \label{fig2:b}
  \end{subfigure}\\
  \begin{subfigure}[b]{0.44\textwidth}
  \centering\captionsetup{width=.9\linewidth}%
    \includegraphics[width=\textwidth]{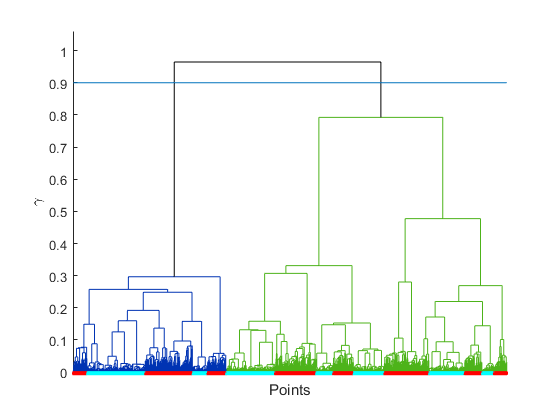}
    \caption{Dendrogram on the 2O dataset}
    \label{fig2:c}
  \end{subfigure}  %
  \begin{subfigure}[b]{0.44\textwidth}
    \centering\captionsetup{width=.9\linewidth}%
    \includegraphics[width=\textwidth]{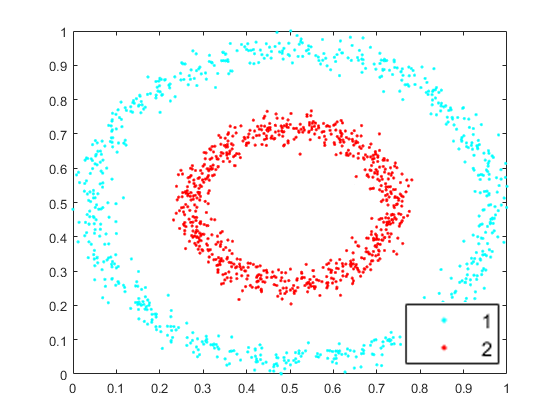}
    \caption{The 2O dataset with true labels}
        \label{fig2:d}
  \end{subfigure}\\
  \begin{subfigure}[b]{0.44\textwidth}
  \centering\captionsetup{width=.9\linewidth}%
    \includegraphics[width=\textwidth]{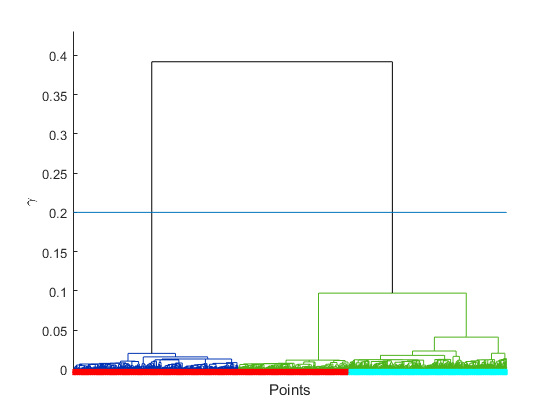}
    \caption{Dendrogram on the 2Q dataset}
    \label{fig2:e}
  \end{subfigure}  %
  \begin{subfigure}[b]{0.44\textwidth}
    \centering\captionsetup{width=.9\linewidth}%
    \includegraphics[width=\textwidth]{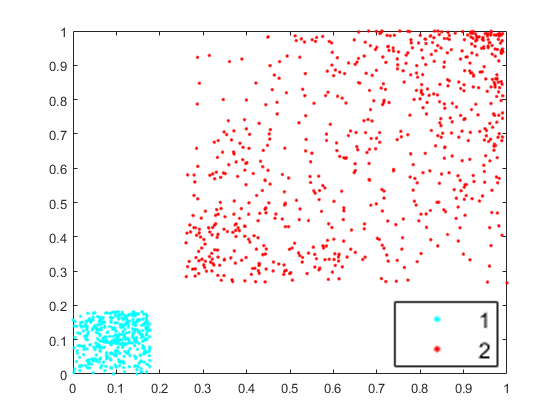}
    \caption{The 2Q dataset with true labels}
        \label{fig2:f}
  \end{subfigure}
   \caption{The hierarchical  DP clustering results on three datasets: they are the same as those produced by the flat DP shown in Figure \ref{fig01} and Figure \ref{fig1}. The horizontal line denotes the $\gamma$ threshold and all points below that line are grouped as a cluster. The colours on the dendrogram branch correspond to the clustering results. The colours at the bottom row in each dendrogram correspond to the true cluster labels of all points shown in Figures (b), (d) and (f).}  
    \label{fig2} 
\end{figure}

Because the cluster modes are selected according to the ranking of $\gamma$, the clustering result with $k$ clusters can be obtained by setting an appropriate threshold of $\gamma$ on the bottom-up based hierarchical clustering result such that the number of clusters below the threshold is $k$. Since both the hierarchical DP and the flat (original) DP  produce the same flat clustering result, the name DP is used hereafter to denote both the two versions, as far as the flat clustering result is concerned.

Here we provide a lemma on the hierarchical view of DP as follows:

\begin{lem}
If there exists exactly one $\eta_x \in D$, $\forall_{x\in D_{\ominus}}$, then the hierarchical view of the DP dendrogram is unique; and the clustering result of DP using a $\gamma$ threshold is unique.
\label{uniView}
\end{lem}

\begin{proof}
Since the dendrogram is built by gradually linking each point $x$ to $\eta_x$ until reaching $\hat{m}$, when $\forall_{x\in D_{\ominus}}$, there exists exactly one $\eta_x \in D$, then the $path(x,\hat{m})$ is unique. Therefore, the hierarchical view of the dendrogram is unique. 

When setting a $\gamma$ threshold, the points with $\gamma$ values higher than the threshold would become cluster modes $m$. Since the other points only have one unique path linking one of the cluster modes and the points linking to the same mode become the same cluster, the clustering result is unique.
\end{proof}

{\bf Advantages of the hierarchical DP}: There are two advantages of the hierarchical DP over the flat DP. First, the former avoids the need to select $k$ cluster modes in the first step of the clustering process. Instead, after the dendrogram is produced at the end of the hierarchical clustering process, $k$ is required only if a flat clustering is to be extracted from the dendrogram.
Second, the dendrogram produced by the hierarchical DP provides a richer information of the hierarchical structure of clusters in a dataset than a flat partitioning provided by the flat DP. 

The hierarchical DP has the same time complexity of the flat DP, i.e.,  $\mathcal{O}(dn^{2})$, since $\gamma$ and $\eta$ are calculated for all points only once. 

\subsection{A density-connected hierarchical DP}
\label{sec_DC-HDP}
In order to enhance  DP to detect clusters from a larger set of data distributions than that covered by density-connected clusters or $\eta$-linked clusters, the clusters based on Definition~\ref{def:DCHDP} is used. 

Using the hierarchical DP, it turns out that only a simple rule needs to be incorporated, i.e., to check whether two cluster modes at the current level are density-connected before merging them at the next level in the hierarchical structure: two clusters $ C_i$ and $ C_j$ can only be merged if there is an $\eta$-density-connected path between their cluster modes.
This is checked at each level of the hierarchy, where the procedure selects the cluster having the mode $x$ with smallest $\gamma'(x)=\rho_{\epsilon}(x) \times \delta'(x)$ to merge with another cluster having $\eta_x'$, where  
 \begin{equation}
     \delta'(x) = \left\{ \begin{array}{l}    
         (i) \ \ dis(x, \eta'_x), \ \ \ \ \ \text{ if} \ \exists_{y\in D} \ y=\eta'_x \\ 
        (ii) \  \displaystyle\max_{y \in D} dis(x, y), \text{ otherwise.}
     \end{array} \right.
 \label{delta2}    
 \end{equation}

We call the new algorithm, DC-HDP, as it employs this cluster merging rule based on the density connectivity with the hierarchical DP procedure. The DC-HDP algorithm is shown in Algorithm \ref{DC-HDP}.

\begin{algorithm}[!htb]
	\caption{DC-HDP($D$, $\epsilon$, $\tau$)}
	\begin{algorithmic}[1] 
		\Require $D$ - input data ($n  \times d$ matrix); $\epsilon$ - radius of the neighbourhood;  $\tau$ - density threshold.
		\Ensure $T$ - a dendrogram (an agglomerative hierarchical cluster tree).
		\State  Calculate $\gamma'(x) =\rho_{\epsilon}(x) \times \delta'(x)$ for each point $x\in D$ based on Equation \ref{epsN} and Equation \ref{delta2}
		\State Initialise $ModeList=\{ m\in D \ | \ \eta'_m \in D \}$, where $\eta'_m$ is based on Equation \ref{eta'}. 
		\State Initialise  $\bar{\mathbb{C}}= \{\langle 0, \bar{C}, \gamma'(x)\rangle \ |\ x \in D \}$; each point in $D$ is a cluster $\bar{C}$. 
		\State $i=1$
		\While {$ModeList \ne \emptyset$} 
		\State Identify the cluster $\bar{C}$ having mode $m \in ModeList$ with the smallest $\gamma'(m)$ 
		\State Identify the cluster $\bar{C_j}$ having the point $\eta'_{m}$ 
		\State Merge clusters $\bar{C}$ and $\bar{C_j}$ as a new cluster $\bar{C_i}$
		\State Add the merged cluster as a triplet $\langle i, \bar{C_i}, \gamma'(m)\rangle$ to $\bar{\mathbb{C}}$ 
		\State Remove $m$ from $ModeList$
		\State $i=i+1$
		\EndWhile
	    \State Merge all points as a single cluster  
	    and add $\langle i, \bar{C_i}, h\rangle$  to  $\bar{\mathbb{C}}$,  where $h=1.1\times max(\gamma'(m))$ 
	    \State Create a dendrogram $T$ based on $\bar{\mathbb{C}}$
		\State \Return $T$ 
	\end{algorithmic}
	\label{DC-HDP} 
\end{algorithm}

Note that, other than $\hat{m}$, there may exist points $x \in D$ which do not have $\eta_x'$.\footnote{An efficient way to find $\eta'_x$ in Equation \ref{delta2} is to search the DBSCAN \cite{ester1996density} cluster which contains $x$.} Let  $G = D \setminus Modelist$. At the end of line 12, only those points which are not $\eta_m'$ for all $m \in Modelist$ become isolated points. 
Also note that it is possible to have more than one cluster at the end of line 12, if the clusters are not density-connected to each other. Therefore, these clusters are merged with all remaining isolated points to yield only one cluster at the top of the hierarchical structure (see line 13 in Algorithm~\ref{DC-HDP}). The $1.1$ in line 13 is only used to make sufficient gaps on the dendrogram.

Once the dendrogram is obtained from Algorithm~\ref{DC-HDP}, a global $\gamma'$ threshold can be used to select the clusters as a flat clustering result. 

Note that the algorithm for the hierarchical DP is the same as the DC-HDP algorithm, except the former uses (i) $\gamma(\cdot)$ instead of $\gamma'(\cdot)$;  and (ii) $D_{\ominus}$ (the whole dataset without the global peak point) instead of $ModeList$. 

Compared with DP, DC-HDP has one more parameter $\tau$, used for the density-connectivity check; and the same $\epsilon$ are used for both density estimation and density connectivity check. DC-HDP maintains the same time complexity of DP, i.e., $\mathcal{O}(dn^{2})$. Similar to hierarchical DP, if $\forall_{x\in G} \exists! \eta'_x \in D$, the hierarchical view of the DC-HDP dendrogram is unique and the clustering result of DC-HDP using a $\gamma'$ threshold is unique.

DC-HDP has the ability to enhance the clustering performance of DP on a dataset having  $\eta$-density-connected clusters which encompass the two kinds of clusters DP is weak at, mentioned in Section~\ref{sec_problem}. This is because DC-HDP does not establish any DCpath between points from different clusters which are not density-connected. Since clusters which are not density-connected are only merged at the top of the dendrogram with the highest $\gamma'$ value, a global $\gamma'$ threshold can separate all these clusters. Unlike DBSCAN, DC-HDP does not rely on a global density threshold to link points; thus, DC-HDP has the ability to detect clusters with varied densities. 

In a nutshell, {\em DC-HDP takes advantage of the individual strengths of DBSCAN and DP, i.e., it has the enhanced ability to identify all clusters of arbitrary shapes and varied densities; where neither DBSCAN nor DP has}.

Figure \ref{fig3} illustrates a clustering result from DC-HDP as a dendrogram on each of the three datasets used in Figure~\ref{fig01} and Figure \ref{fig1}. It shows that all clusters can be detected  perfectly by DC-HDP when an appropriate $\gamma'$ threshold (blue horizontal line) is used on the dendrogram. 

\begin{figure}[!htbp]
  \begin{subfigure}[b]{0.44\textwidth}
  \centering\captionsetup{width=.9\linewidth}%
    \includegraphics[width=\textwidth]{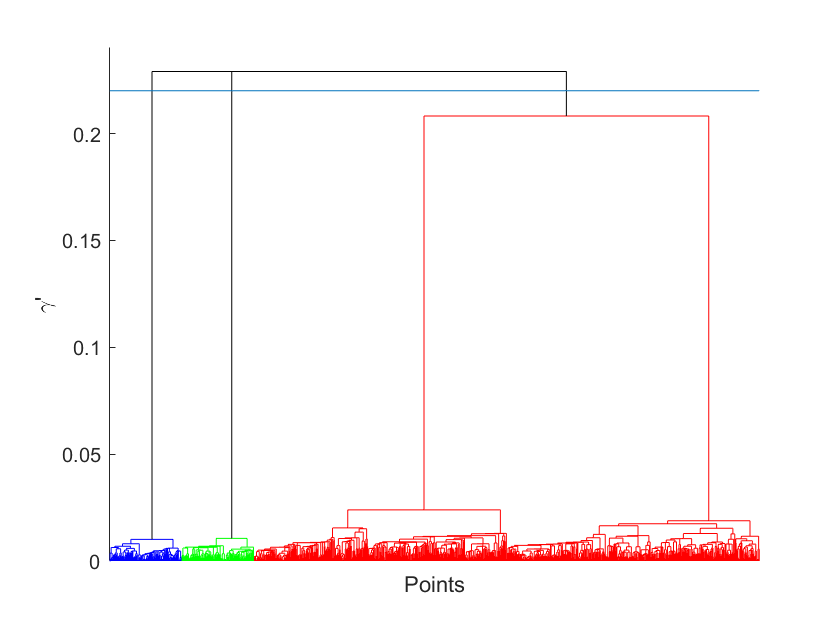}
    \caption{Dendrogram on 3C dataset}
    \label{fig3:f}
  \end{subfigure}  %
  \begin{subfigure}[b]{0.44\textwidth}
    \centering\captionsetup{width=.9\linewidth}%
    \includegraphics[width=\textwidth]{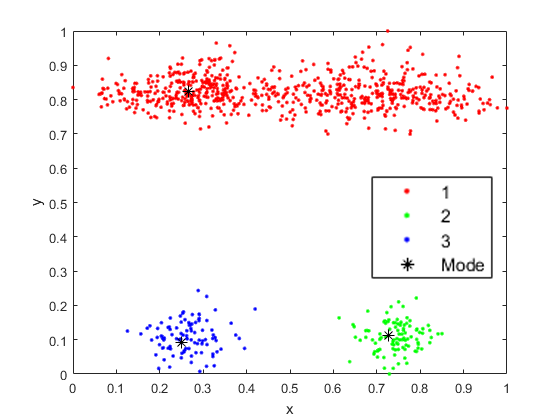}
    \caption{Clustering result on 3C dataset}
        \label{fig3:e}
  \end{subfigure}\\
  \begin{subfigure}[b]{0.44\textwidth}
  \centering\captionsetup{width=.9\linewidth}%
    \includegraphics[width=\textwidth]{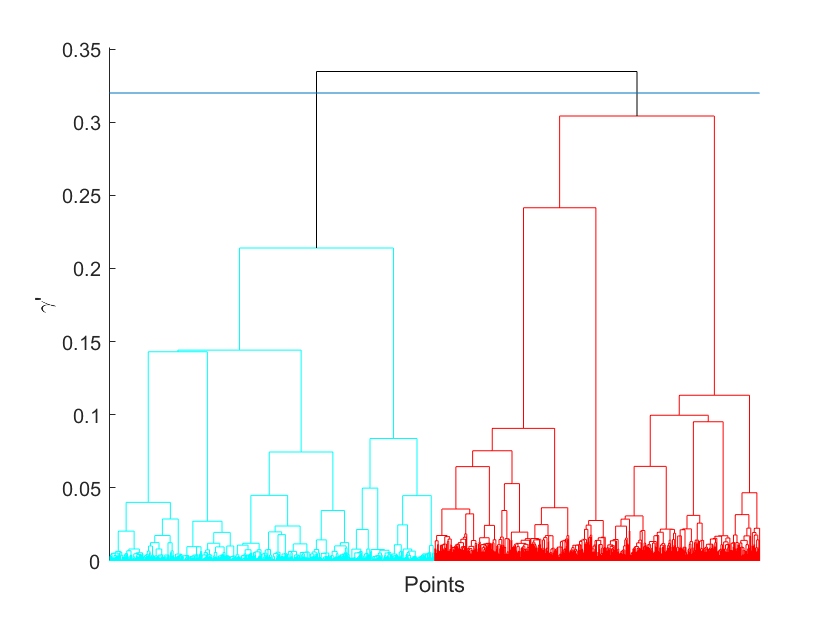}
    \caption{Dendrogram on 2O dataset}
    \label{fig3:a}
  \end{subfigure}  %
  \begin{subfigure}[b]{0.44\textwidth}
    \centering\captionsetup{width=.9\linewidth}%
    \includegraphics[width=\textwidth]{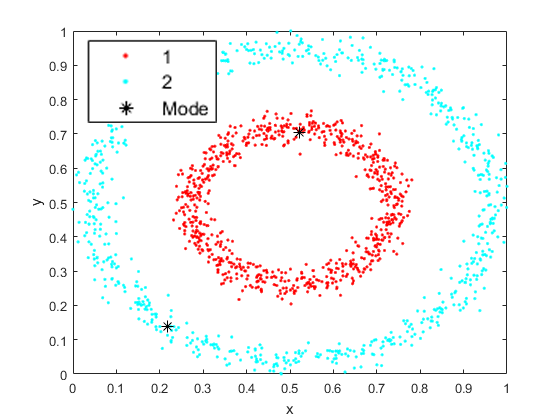}
    \caption{Clustering result on 2O dataset}
        \label{fig3:b}
  \end{subfigure}\\
  \begin{subfigure}[b]{0.44\textwidth}
  \centering\captionsetup{width=.9\linewidth}%
    \includegraphics[width=\textwidth]{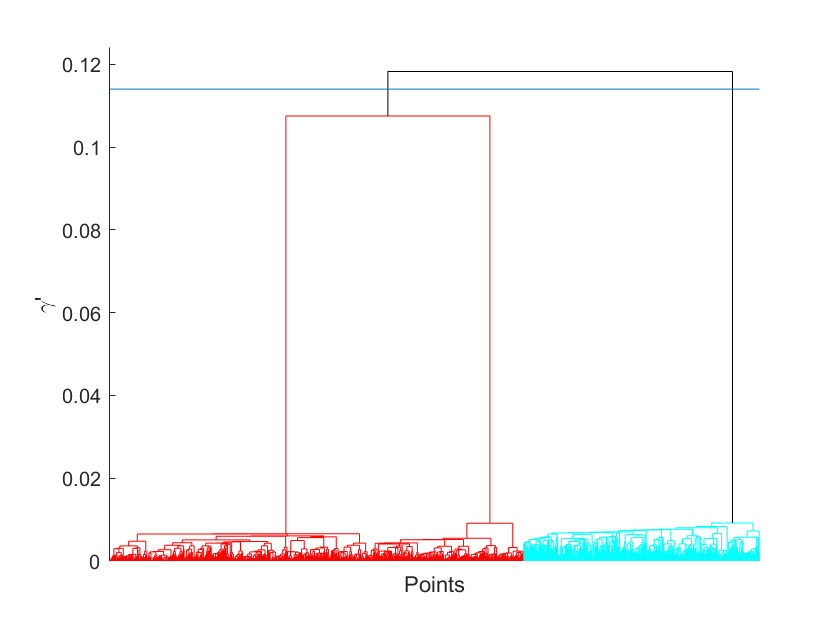}
    \caption{Dendrogram on 2Q dataset}
    \label{fig3:c}
  \end{subfigure}  %
  \begin{subfigure}[b]{0.44\textwidth}
    \centering\captionsetup{width=.9\linewidth}%
    \includegraphics[width=\textwidth]{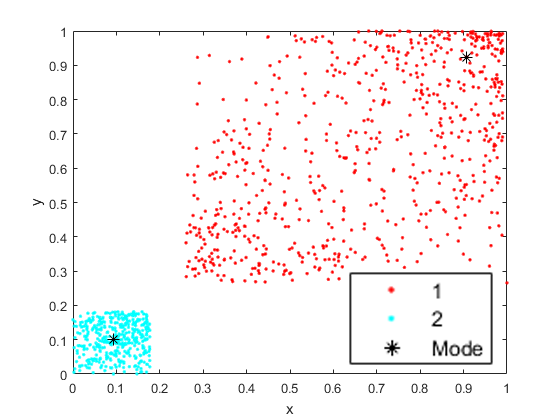}
    \caption{Clustering result on 2Q dataset}
        \label{fig3:d}
  \end{subfigure}
   \caption{DC-HDP's hierarchical clustering results on the three datasets. Setting $\epsilon=0.1$ and $\tau=1$ (with  $k$ set to the true number of clusters) produces the perfect results on all three datasets. The colours used in each dendrogram  (on the left) are the same as used in the corresponding scatter plot (on the right).}
    \label{fig3} 
\end{figure}

Furthermore, DC-HDP has an additional advantage in comparison with DP and DBSCAN, i.e., the dendrogram produced has a rich structure of clusters at different levels. This is the advantage of a hierarchical clustering over a flat clustering.

\section{Empirical evaluation}
\label{sec_result}

This section presents experiments designed to evaluate the effectiveness of DC-HDP. We compare DC-HDP with 4 density-based clustering algorithms (DBSCAN \cite{ester1996density}, Mean shift clustering \cite{comaniciu2002mean},  DP \cite{rodriguez2014clustering} and LC-DP \cite{Chen2018}), 3 hierarchical clustering algorithms (OPTICS \cite{ankerst1999optics}, PHA \cite{LU20131227} and HDBSCAN \cite{Campello:2015:HDE}) and 1 graph-based spectral clustering algorithm \cite{Chen11}. Because LC-DP is an improvement over DP, to conduct a head-to-head comparison with LC-DP, we used the same local contrast $LC(x)$ of LC-DP \cite{Chen2018} for DC-HDP to determine $\eta_x$ in density-connected clusters in the following experiments.

Since clustering is an unsupervised learning task, here we used a standard external evaluation method such that first running the clustering algorithm on the whole dataset with particular parameter settings and then comparing the clustering result with the ground truth \cite{han2011data,aggarwal2013data}. Furthermore, density-based clustering algorithms normally are sensitive to parameter settings because the nonparametric density estimator used in these algorithms suffers from boundary bias without specific knowledge about the domain of the data \cite{botev2010kernel}. To obtain a fair comparison, We report the best clustering performance within a range of parameter search for each algorithm. 

The clustering performance is measured in terms of Macro-average F-measure score\footnote{It is worth noting that other evaluation measures such as purity and Normalized Mutual Information (NMI) \cite{strehl2002cluster} only take into account the points assigned to clusters and do not account for noise. A clustering algorithm which assigns the majority of the points to noise may result in a high clustering performance. Thus the F-measure is more suitable than purity or NMI in assessing the clustering performance of density-based clustering, e.g, DBSCAN and OPTICS.}: given a clustering result, we calculate the precision score $P_{i}$ and the recall score $R_{i}$ for each cluster $C_{i}$ based on the confusion matrix, and the F-measure score of $C_{i}$ is the harmonic mean of $P_{i}$ and $R_{i}$. After computing the pairwise F-measure Score, then we use the Hungarian algorithm \cite{kuhn1955hungarian} to search the optimal match between the clustering results and true clusters. The overall F-measure score is the unweighted average over all matched clusters: F-measure$=\frac{1}{k}\sum_{i=1}^{k}\frac{2P_{i}R_{i}}{P_{i}+R_{i}}$. 

We used 6 artificial datasets (Pathbased, Compound, 2O, 3C, 3G and 2Q) and 11 real-world datasets with different data sizes and dimensions.\footnote{Pathbased is from Chang et al. \cite{chang2008robust}, Compound is from Zahn \cite{zahn1971graph}, Shape is from Müller et al. \cite{Muller2009}, COIL20 is from Li et al. \cite{li2016feature} and all other real-world datasets are from UCI Machine Learning Repository \cite{Lichman:2013}.} Table~\ref{dataset} presents the data properties of the datasets. Figure \ref{figS} shows the scatter plots of Pathbased and Compound datasets.

\begin{table}[!htb]
\scriptsize
 \renewcommand{\arraystretch}{1.2}
 \setlength{\tabcolsep}{4.pt}
  \centering
  \caption{Data properties}
    \begin{tabular}{|c|ccc|}
    \hline
    Dataset & Data Size & \#Dimensions & \#Classes  \\
  \hline
    Pathbased & 300   & 2     & 3  \\
    Compound & 399   & 2     & 6 \\
    3C    & 900   & 2     & 3 \\
    2Q    & 1100  & 2     & 2 \\
    3G    & 1500  & 2     & 3 \\
    2O    & 1500  & 2     & 2 \\     \hdashline
    Shape & 160   & 17    & 9 \\
    LSVT  & 126   & 310   & 2 \\
    GPS   & 163   & 6     & 2 \\
    Seeds & 210   & 7     & 3 \\
    Thyroid & 215   & 5     & 3 \\
    Haberman & 306   & 3     & 2 \\
    Ecoli & 336   & 7     & 8 \\
    Liver & 345   & 6     & 2 \\
    Ionosphere  & 351   & 33    & 2 \\
    Control & 600   & 60    & 6 \\
    Breast & 699   & 9     & 2 \\
    Pima  & 768   & 8     & 2 \\
    Mice  & 1080  & 83    & 8 \\
    Messidor & 1151  & 19    & 2 \\
    Banknote & 1372  & 4     & 2 \\
    COIL20 & 1440  & 1024  & 20 \\
    HumanActivity & 1492  & 561   & 6 \\
    Segment & 2310  & 19    & 7 \\
    Gisette & 7000  & 5000  & 2 \\
    Smartphone & 7767  & 561   & 12 \\
    Pendig & 10992 & 16    & 10 \\
    Magic & 19020 & 10    & 2  \\
    \hline
    \end{tabular}%
  \label{dataset}%
\end{table}%

\begin{figure}[!htb]
  \begin{subfigure}[b]{0.44\textwidth}
  \centering\captionsetup{width=.9\linewidth}%
    \includegraphics[width=\textwidth]{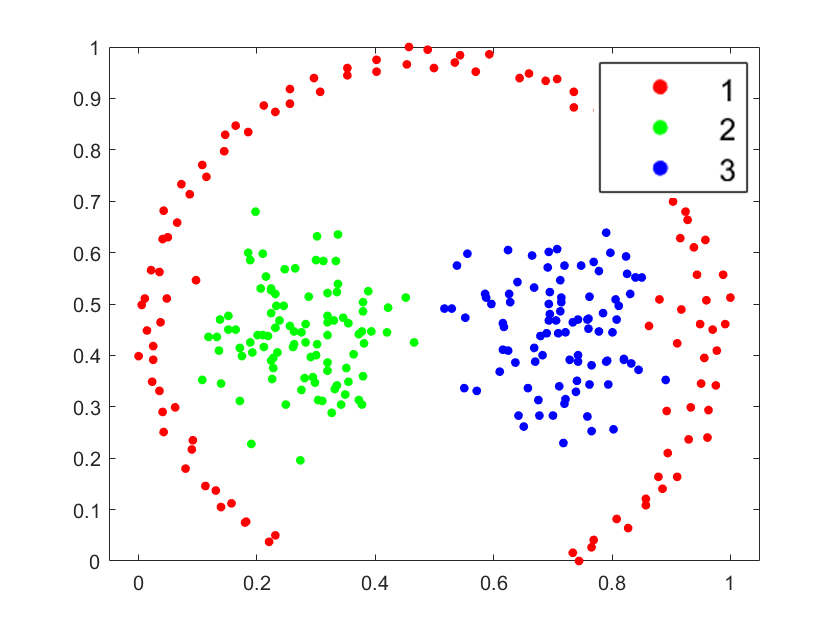}
    \caption{Pathbased dataset}
    \label{figS:a}
  \end{subfigure}  %
  \begin{subfigure}[b]{0.44\textwidth}
    \centering\captionsetup{width=.9\linewidth}%
    \includegraphics[width=\textwidth]{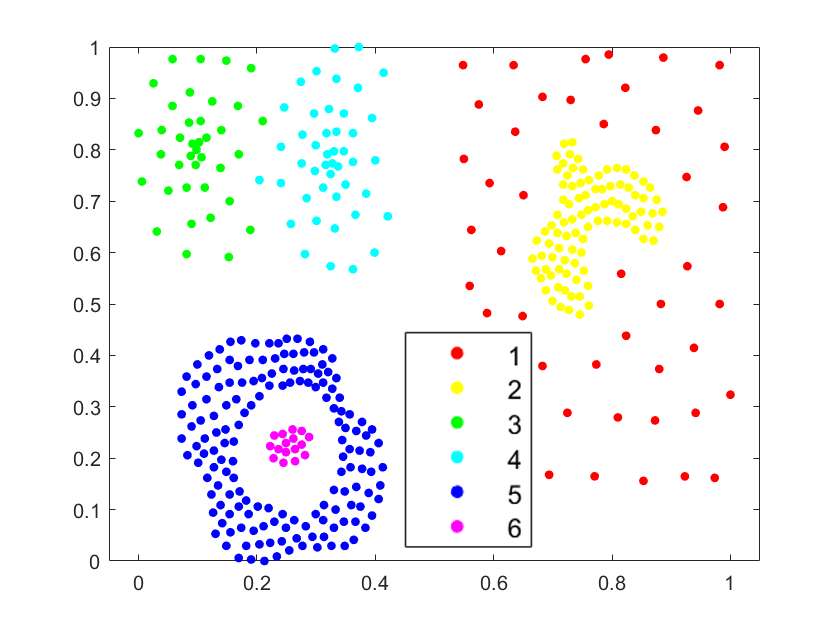}
    \caption{Compound dataset}
        \label{figS:b}
  \end{subfigure}
   \caption{Scatter plots of Pathbased and Compound datasets.}
    \label{figS} 
\end{figure}

All algorithms used in our experiments were implemented in Matlab.\footnote{The source codes of all algorithms used in our experiments can be obtained at:
\vspace{-\topsep}
\begin{itemize}
  \setlength{\parskip}{0pt}
  \setlength{\itemsep}{0pt plus 1pt} 
\item
 {DBSCAN, DC-HDP, LC-DP and DP:} \url{https://sourceforge.net/projects/hierarchical-dp/}
\item
 {HDBSCAN:}  \url{https://au.mathworks.com/matlabcentral/fileexchange/64864-jorsorokin-hdbscan}
\item
 {OPTICS:} \url{https://github.com/alexgkendall/OPTICS_Clustering}
\item
 {Spectral:} \url{https://sites.cs.ucsb.edu/~wychen/sc}
 \item
  {Mean-shift:} \url{https://au.mathworks.com/matlabcentral/fileexchange/10161-mean-shift-clustering}
\item
 {PHA:} \url{https://au.mathworks.com/matlabcentral/fileexchange/46134-fast-hierarchical-clustering-method-pha} 
\end{itemize}} The experiments were run on a machine with eight cores (Intel Core i7-7820X 3.60GHz) and 32GB memory. All datasets were normalised using the $min$-$max$ normalisation to yield each attribute to be in [0,1] before the experiments began.    

For DP, we normalised both $\rho$ and $\delta$ to be in [0,1] before selecting $k$ cluster modes so that these two parameters have the same weight in their product $\rho \times \delta$. Table \ref{para} lists the parameters and their search ranges for each algorithm. Note that the parameter $\xi$ in OPTICS is used to identify downward and upward areas of the reachability plot in order to extract all clusters using a bottom-up hierarchical method \cite{ankerst1999optics}. For all algorithms using $\epsilon$-neighbourhood for density estimation, $\epsilon$ is set to be in $[0.1\%, 99.9\%]$ of the maximum pairwise distance. Note that for DC-HDP, the $k$ parameter is only required to extract $k$ clusters from the dendrogram (at the end of Algorithm \ref{DC-HDP}) by setting a corresponding $\gamma'$ threshold. For the fair comparison with DP, we fix the additional parameter $K=\sqrt{n}$ for LC-DP (as suggested in Chen et al. \cite{Chen2018}). $\tau$ is set to 1 for DC-HDP as the minimum density threshold. 
%

\begin{table}[!htb] 
  \renewcommand{\arraystretch}{1.2}
 \setlength{\tabcolsep}{4.pt}
  \centering
  \caption{Parameters and their search range for each algorithm.}
    \begin{tabular}{|c|c|}
    \hline
    Algorithm & Parameter with search range \\
    \hline
    PHA  & $k \in \{ 2,3,...,50 \}$  \\
    DBSCAN & $Minpts \in \{ 2,3,...,20 \}$; $\epsilon \in \{ 0.1\%,0.2\%,...,99.9\% \}$ \\
    HDBSCAN  &   $Minpts \in \{ 2,3,...,100 \}$; $Minclustsize \in \{ 2,3,...,100 \}$ \\
    OPTICS & $Minpts \in \{ 2,3,...,20 \}$; $\xi \in \lbrace 0.01, 0.02,... , 0.99 \rbrace $\\
    Spectral &  $\sigma\in \{2^{-5},2^{-4},...,2^5 \}$; $k \in \{ 2,3,...,50 \}$  \\
    Mean-shift & $\epsilon \in \{ 0.1\%,0.2\%,...,99.9\% \}$ \\
    DP & $k \in \{ 2,3,...,50 \}$; $\epsilon \in \{ 0.1\%,0.2\%,...,99.9\% \}$ \\
    LC-DP & $k \in \{ 2,3,...,50 \}$; $\epsilon \in \{ 0.1\%,0.2\%,...,99.9\% \}$; $K=\sqrt{n}$  \\
    DC-HDP &  $k \in \{ 2,3,...,50 \}$; $\epsilon \in \{ 0.1\%,0.2\%,...,99.9\% \}$; $\tau =1$ \\
    \hline
    \end{tabular}%
  \label{para}%
\end{table}%

Table \ref{best} shows the best F-measures of the 9 algorithms. In terms of the average F-measure (shown in the third bottom row), DC-HDP has the highest average F-measure of 0.82. The closest contenders are  LC-DP, DP and Spectral clustering which have F-measures: 0.80, 0.76,  and 0.78, respectively.  The next three contenders are OPTICS, DBSCAN and HDBSCAN which have F-measures: 0.72, 0.63 and 0.59, respectively. Mean-shift and PHA have the lowest average F-measures of 0.64 and 0.58, respectively.

  \begin{table}[htbp]
  \scriptsize
    \renewcommand{\arraystretch}{1.2}
  \setlength{\tabcolsep}{2.84pt}
    \centering
     \caption{Best F-measures of different clustering algorithms on 28 datasets. For each clustering algorithm, the best performer in each dataset is boldfaced.}
    \begin{tabular}{|c|ccccccccc|}
    \hline
    Data  & PHA   & DBSCAN & HDBSCAN & OPTICS & Spectral & Mean-shift & DP    & LC-DP & DC-HDP  \\
    \hline
    Pathbased & 0.64  & 0.83  & 0.62  & 0.88  & 0.89  & 0.75  & 0.94  & \textbf{0.96} & \textbf{0.96} \\
    Compound & 0.86  & 0.79  & 0.65  & 0.85  & 0.78  & 0.72  & 0.87  & 0.88  & \textbf{0.94} \\
    3C    & \textbf{1.00} & \textbf{1.00} & \textbf{1.00} & \textbf{1.00} & \textbf{1.00} & \textbf{1.00} & 0.92  & \textbf{1.00} & \textbf{1.00} \\
    2Q    & 0.84  & \textbf{1.00} & \textbf{1.00} & \textbf{1.00} & \textbf{1.00} & \textbf{1.00} & 0.97  & \textbf{1.00} & \textbf{1.00} \\
    3G    & 0.92  & 0.67  & 0.92  & 0.98  & \textbf{0.99} & 0.97  & \textbf{0.99} & \textbf{0.99} & \textbf{0.99} \\
    2O    & 0.51  & \textbf{1.00} & \textbf{1.00} & \textbf{1.00} & \textbf{1.00} & 0.62  & 0.68  & 0.91  & \textbf{1.00}\\
    \hdashline
    Shape & 0.68  & 0.60  & 0.65  & 0.67  & 0.73  & 0.68  & 0.77  & 0.78  & \textbf{0.80}  \\
    LSVT  & 0.40  & 0.38  & 0.40  & 0.70  & 0.56  & 0.43  & 0.68  & 0.73  & \textbf{0.74} \\
    GPS   & 0.75  & 0.75  & 0.68  & 0.76  & 0.81  & 0.76  & 0.81  & \textbf{0.84} & 0.83  \\
    Seeds & 0.90  & 0.71  & 0.50  & 0.80  & 0.93  & 0.91  & 0.91  & \textbf{0.93} & \textbf{0.93} \\
    Thyroid & 0.55  & 0.58  & 0.45  & 0.59  & \textbf{0.96} & 0.61  & 0.87  & 0.87  & 0.90  \\
    Haberman & 0.52  & 0.47  & 0.42  & \textbf{0.63} & 0.52  & 0.55  & 0.56  & 0.61  & \textbf{0.63} \\
    Ecoli & 0.53  & 0.34  & 0.17  & 0.44  & 0.69  & 0.71  & 0.66  & 0.70  & \textbf{0.76} \\
    Liver & 0.44  & 0.37  & 0.37  & \textbf{0.66} & 0.54  & 0.40  & 0.58  & 0.59  & 0.60  \\
    Ionosphere & 0.37  & 0.50  & 0.39  & 0.77  & 0.73  & 0.45  & 0.77  & 0.79  & \textbf{0.91} \\
    Control & 0.50  & 0.53  & 0.60  & 0.64  & 0.77  & 0.62  & 0.74  & 0.77  & \textbf{0.80} \\
    Breast & 0.43  & 0.82  & 0.72  & 0.84  & \textbf{0.97} & \textbf{0.97} & \textbf{0.97} & 0.96  & 0.96  \\
    Pima  & 0.44  & 0.43  & 0.43  & \textbf{0.65} & 0.64  & 0.49  & 0.62  & \textbf{0.65} & \textbf{0.65} \\
    Mice  & 0.99  & 0.99  & 0.99  & 0.98  & \textbf{1.00} & 0.99  & \textbf{1.00} & \textbf{1.00} & \textbf{1.00} \\
    Messidor & 0.50  & 0.48  & 0.48  & 0.62  & 0.58  & 0.38  & 0.55  & 0.53  & \textbf{0.59} \\
    Banknote & 0.96  & 0.95  & 0.89  & 0.95  & 0.99  & 0.77  & \textbf{1.00} & 0.97  & 0.96  \\
    COIL20 & 0.46  & \textbf{0.84} & 0.77  & 0.81  & 0.76  & 0.56  & 0.78  & 0.77  & 0.79  \\
    HumanActivity & 0.19  & 0.33  & 0.30  & 0.37  & 0.57  & 0.24  & \textbf{0.59} & 0.57  & 0.58  \\
    Segment & 0.58  & 0.59  & 0.65  & 0.69  & 0.73  & 0.67  & 0.78  & 0.78  & \textbf{0.79} \\
    Gisette & 0.34  & 0.33  & 0.33  & 0.34  & 0.71  & 0.33  & 0.51  & 0.81  & \textbf{0.83} \\
    Smartphone & 0.11  & 0.19  & 0.10  & 0.29  & \textbf{0.53} & 0.17  & 0.40  & 0.51  & \textbf{0.53} \\
    Pendig & 0.54  & 0.70  & 0.69  & 0.72  & \textbf{0.85} & 0.83  & 0.79  & 0.84  & 0.83  \\
    Magic & 0.40  & 0.42  & 0.42  & 0.65  & 0.65  & 0.40  & 0.66  & \textbf{0.74} & 0.71  \\
    \hline
    \textit{Average} & 0.58  & 0.63  & 0.59  & 0.72  & 0.78  & 0.64  & 0.76  & 0.80  & 0.82  \\
    \hline
    \textit{\# Best} & 1     & 4     & 3     & 6     & 9     & 3     & 5     & 9     & 19  \\
    \hline
    \textit{Average rank} & 7.3   & 6.8   & 7.5   & 4.7   & 3.6   & 6.2   & 3.9   & 2.9   & 2.2  \\
    \hline
    \end{tabular}%
  \label{best}%
  \end{table}%
  
Among the 9 algorithms, both DC-HDP was the top performers on 19 out of 28 datasets, followed by LC-DP and Spectral clustering on 9 datasets. OPTICS and DP were the top performers on 6 and 5 datasets, respectively. DBSCAN, HDBSCAN and Mean-shift were the top performers on 4, 4, and 3 datasets, respectively. 

Notably on the four synthetic datasets (3C, 2Q, 3G and 2O), only DC-HDP, OPTICS and Spectral identified the clusters almost perfectly; while other algorithms failed on at least one of these datasets. Spectral clustering performs poor on the Pathbased and Compound datasets in which  clusters of different densities are closed to each other. LC-DP could not obtain the perfect result on the 2O datasets because it failed to separate the 2 circles. For the 3G dataset which has clusters with varied densities, DBSCAN obtained the lowest F-measure of 0.67. HDBSCAN achieved  F-measure of 0.92 only on this dataset since it assigned many high-density points to noise.

. 

It is interesting to mention that both LC-DP and DC-HDP use a local density estimator which enhances the mode selection of DP. 
The key difference between LC-DP and DC-HDP is that DC-HDP has the density-connectivity check when linking points. Thus, DC-HDP overcomes the drawbacks of both DP and LC-DP in detecting $\eta$-density-connected clusters on the 2O dataset.



To evaluate whether the performance difference among the algorithms is significant, we conduct the Friedman test with the post-hoc Nemenyi test \cite{demvsar2006statistical}. Figure \ref{sig2} shows the significance test result on the 5 algorithms with average F-measure higher than 0.70. It shows that DC-HDP is significantly better than Spectral clustering, DP and OPTICS.

 \begin{figure}[!htbp]
  	\centering
  \begin{subfigure}[b]{0.45\textwidth}
  \centering
    \includegraphics[width=2.3in]{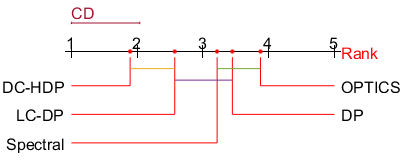}
  \end{subfigure}
  	\caption{Critical difference (CD) diagram of the post-hoc Nemenyi test ($\alpha=0.10$) for 5 clustering algorithms. Two algorithms are significant different if the gap between their ranks is larger than CD. Otherwise, there is a line linking them.}
  	\label{sig2}
  \end{figure}

Table \ref{runtime} presents the runtimes of the 9 algorithms on 4 datasets with different sizes. For a fair comparison, we converted DP and LC-DP to the same hierarchical version as DC-HDP. It shows that DC-HDP is only a bit slower than DP in practice due to the additional density connectivity check. Note that Spectral clustering has $\mathcal{O}(n^3)$ but others have $\mathcal{O}(n^2)$ time complexity.

\begin{table*}[!htbp]
   \setlength{\tabcolsep}{2.84pt}
  \centering
  \caption{Runtime (in seconds) comparison}   
      \begin{tabular}{|c|ccccccccc|}
      \hline
      Data  & PHA   & DBSCAN & HDBSCAN & OPTICS & Spectral & Mean-shift & DP    & LC-DP & DC-HDP\\
      \hline
      Banknote & 0.05  & 0.02  & 1.15  & 0.10  & 0.40  & 0.02  & 0.04  & 0.11  & 0.06  \\
      Segment & 0.17  & 0.06  & 1.94  & 0.26  & 4.91  & 0.31  & 0.10  & 0.30  & 0.14  \\
      Pendig & 5.39  & 0.87  & 15.78  & 4.96  & 105.75  & 82.65  & 2.34  & 5.66  & 6.01  \\
      Magic & 16.01  & 2.94  & 55.59  & 15.49  & 154.76  & 51.42  & 6.79  & 15.15  & 10.40 \\
      \hline
      \end{tabular}%
  \label{runtime}%
\end{table*}%

\section{Discussion}
 
\subsection{Relation to existing hierarchical clustering algorithms}

Among existing hierarchical clustering methods, DC-HDP is closest to traditional agglomerative methods, where they all employ the bottom-up strategy to merge two individual clusters at each level successively in a tree structure.

However, DC-HDP and hierarchical DP differ from traditional agglomerative methods in two ways.
First, when a traditional agglomerative method merges two clusters, they are selected based on a set-based dissimilarity measure. There are different measures that can be used to merge two clusters by trading off quality versus efficiency, such as single-linkage, complete-linkage, all-pairs linkage and centroid-linkage measures \cite{aggarwal2013data}.
In contrast, DC-HDP and hierarchical DP do not simply employ a dissimilarity measure to determine the two clusters to merge at each level. Instead, they first identify the cluster having the smallest $\gamma'$ (and $\gamma$), and then select another cluster which has the shortest path length to it. While the path length may be considered as a kind of dissimilarity measure, that is a supporting measure, and the key determinant is $\gamma'$.

Second, different from traditional methods, DC-HDP and hierarchical DP are a new agglomerative approach detecting $\eta$-density-connected clusters and $\eta$-linked clusters, respectively. Therefore, they can detect arbitrarily shaped clusters while existing agglomerative methods generally detect clusters with specific shapes, e.g, single-linkage measure tends to output elongated-shaped clusters, complete-linkage measure tends to detect compact-shaped clusters, all-pairs linkage and centroid-linkage measures tend to find globular clusters \cite{aggarwal2013data}.

The standard algorithm for hierarchical agglomerative clustering normally has a time complexity of $\mathcal{O}(n^{3})$ \cite{Day1984}. However, many efficient hierarchical agglomerative clustering approaches have the same quadratic time complexity and space complexity as DC-HDP when the pairwise distance matrix is require as input, e.g., SLINK \cite{Sibson1973} for single-linkage and CLINK \cite{SLINK1977} for complete-linkage clustering. 


PHA measures the similarity between two clusters based on a hypothetical potential field that relies on both local and global data distribution information. It can detect slightly overlapping clusters with non-spherical shapes in noisy data. However, compared with density-based methods (e.g., DBSCAN, HDBSCAN, OPTICS and DC-HDP), PHA performed much worse in detecting arbitrarily shaped clusters, e.g., on the 2Q and 2O datasets.

There is another class of algorithms which employs a method to produce an initial set of subclusters from data, before applying a hierarchical clustering. For example,
CHAMELEON \cite{781637} produces a $K$-nearest-neighbour graph from data and then breaks the graph into many small subgraphs (as subclusters). An agglomerative method is finally used to merge subclusters iteratively based on a similarity measure. 
The same general approach is used in two more recent methods, i.e., HDBSCAN \cite{Campello:2015:HDE} and OPTICS \cite{ankerst1999optics}; though different methods are used to produce subclusters in the preprocessing before building a hierarchical structure on them.

We show that DC-HCP is a simple yet effective approach than this class of algorithms because DC-HDP applies agglomerative clustering directly on individual points in the given dataset without a preprocessing to create subclusters. Section \ref{sec_result} shows that DC-HDP produces a significantly better clustering result than the most recent representative of this class of algorithms, i.e., HDBSCAN, as well as OPTICS.

\subsection{Parameter settings}
DC-HDP requires two parameters $\epsilon$ and $\tau$ to build a dendrogram from a dataset, as shown in Table \ref{para} where $\epsilon$ is more important as it is used in both density estimation and density connectivity check. In our experiments, we found that $\tau$ can be set to 1 (i.e., at least 1 point in the $\epsilon$-neighbourhood) in most datasets in terms of getting the best clustering results. 

In all empirical evaluations reported in Section \ref{sec_result}, we used the same $\epsilon$ parameter for both the density estimation and density connectivity check for DC-HDP. However, we can split $\epsilon$ into two different $\epsilon$ parameters for the two processes individually. By doing so, we found that DC-HDP can perform even better than the results shown in Table \ref{best} on some datasets.

\subsection{Ability to detect noise}
It is worth mentioning that density-based clustering has the ability to identify noise and then filter them out in clustering. For example, DBSCAN uses a global density threshold to identify noise as points with a density lower than the threshold  in the first step of the algorithm \cite{ester1996density}. DP \cite{rodriguez2014clustering} employs a different method to identify the noise which are points with low densities at border regions of clusters (see footnote \ref{foootnote_dp} in Section \ref{sec_related_work} for details). This is conducted at the end of the clustering process. This same method can be used by DC-HDP to identify noise. 

\section{Conclusions} 
The lack of a cluster definition, that a state-of-the-art density-based algorithm called Density Peak (DP) can detect, has motivated the work in this paper.

We formally defined two new kinds of clusters: $\eta$-linked clusters and $\eta$-density-connected clusters. A further analysis revealed that DP is a clustering algorithm detecting $\eta$-linked clusters; and it has weaknesses in data distributions which contain a special kind of $\eta$-linked clusters or some non-$\eta$-linked clusters.  We show that $\eta$-density-connected clusters encompass all $\eta$-linked clusters and the kind of non-$\eta$-linked clusters that DP fails to detect.

After showing that DP clustering can be accomplished as a hierarchical clustering,  we proposed a density-connected hierarchical DP clustering called DC-HDP, which is designed to detect $\eta$-density-connected clusters.

By taking advantage of the individual strengths of DBSCAN and DP, DC-HDP produces clustering outputs which are superior in two key aspects. First, DC-HDP has an enhanced ability to identify clusters of arbitrary shapes and varied densities; where neither DBSCAN nor DP has. Second, the dendrogram generated by DC-HDP gives a richer information of the hierarchical structure of clusters in a dataset than a flat partitioning provided by DBSCAN and DP. DC-HDP achieved the enhanced ability with the same time complexity as DP. The additional parameters of DC-HDP can be set to default values in practice.  

We confirm the previous study that LC-DP is an improvement over DP; and show that the proposed DC-HDP is further improvement over both LC-DP and DP. Our contribution is not merely algorithmic improvement, but formal cluster definitions which were non-existence in the previous studies. These formal definitions are the foundation of DC-HDP.

Our empirical evaluation validates this superiority by showing that DC-HDP produces the best clustering results on 28 datasets in comparison with 8  state-of-the-art clustering algorithms, including density-based clustering, i.e., DBSCAN, Mean Shift clustering, DP and LC-DP; hierarchical clustering, i.e., HDBSCAN, PHA and OPTICS; and graph-based spectral clustering algorithm.

 

\bibliography{ref}

\begin{thebibliography}{10}
\expandafter\ifx\csname url\endcsname\relax
  \def\url#1{\texttt{#1}}\fi
\expandafter\ifx\csname urlprefix\endcsname\relax\def\urlprefix{URL }\fi
\expandafter\ifx\csname href\endcsname\relax
  \def\href#1#2{#2} \def\path#1{#1}\fi

\bibitem{han2011data}
J.~Han, M.~Kamber, J.~Pei, Data Mining: Concepts and Techniques, 3rd Edition,
  Morgan Kaufmann Publishers Inc., San Francisco, CA, USA, 2011.

\bibitem{kaufman2009finding}
L.~Kaufman, P.~J. Rousseeuw, Finding Groups in Data: An Introduction to Cluster
  Analysis, John Wiley \& Sons, 1990.

\bibitem{zaki2014dataminingbook}
M.~J. Zaki, J.~Wagner~Meira, Data Mining and Analysis: Fundamental Concepts and
  Algorithms, Cambridge University Press, 2014.

\bibitem{hartigan1979algorithm}
J.~A. Hartigan, M.~A. Wong, {Algorithm AS 136}: A {\textit{k-}}means clustering
  algorithm, Journal of the Royal Statistical Society. Series C (Applied
  Statistics) 28~(1) (1979) 100--108.

\bibitem{kaufman1987clustering}
L.~Kaufman, P.~Rousseeuw, Clustering by means of medoids, Statistical Data
  Analysis Based on the L1 Norm and Related Methods (1987) 405--416.

\bibitem{aggarwal2013data}
C.~K. Reddy, Data Clustering: Algorithms and Applications, Chapman and
  Hall/CRC, 2018.

\bibitem{ester1996density}
M.~Ester, H.-P. Kriegel, J.~S, X.~Xu, A density-based algorithm for discovering
  clusters in large spatial databases with noise, in: Proceedings of the Second
  International Conference on Knowledge Discovery and Data Mining (KDD-96),
  AAAI Press, 1996, pp. 226--231.

\bibitem{hinneburg2007denclue}
A.~Hinneburg, H.-H. Gabriel, {DENCLUE} 2.0: Fast clustering based on kernel
  density estimation, in: Proceedings of the 7th International Conference on
  Intelligent Data Analysis, IDA'07, Springer-Verlag, Berlin, Heidelberg, 2007,
  pp. 70--80.

\bibitem{ankerst1999optics}
M.~Ankerst, M.~M. Breunig, H.-P. Kriegel, J.~Sander, {OPTICS}: Ordering points
  to identify the clustering structure, in: Proceedings of the 1999 ACM SIGMOD
  International Conference on Management of Data, SIGMOD '99, ACM, New York,
  NY, USA, 1999, pp. 49--60.

\bibitem{duan2007local}
L.~Duan, L.~Xu, F.~Guo, J.~Lee, B.~Yan, A local-density based spatial
  clustering algorithm with noise, Information systems 32~(7) (2007) 978--986.

\bibitem{cassisi2013enhancing}
C.~Cassisi, A.~Ferro, R.~Giugno, G.~Pigola, A.~Pulvirenti, Enhancing
  density-based clustering: Parameter reduction and outlier detection,
  Information Systems 38~(3) (2013) 317--330.

\bibitem{HDBSCAN2013}
R.~J. G.~B. Campello, D.~Moulavi, J.~Sander, Density-based clustering based on
  hierarchical density estimates, in: J.~Pei, V.~S. Tseng, L.~Cao, H.~Motoda,
  G.~Xu (Eds.), Advances in Knowledge Discovery and Data Mining, Springer
  Berlin Heidelberg, Berlin, Heidelberg, 2013, pp. 160--172.

\bibitem{ZHU2016983}
Y.~Zhu, K.~M. Ting, M.~J. Carman, Density-ratio based clustering for
  discovering clusters with varying densities, Pattern Recognition 60 (2016)
  983 -- 997.

\bibitem{rodriguez2014clustering}
A.~Rodriguez, A.~Laio, Clustering by fast search and find of density peaks,
  Science 344~(6191) (2014) 1492--1496.

\bibitem{Chen2018}
B.~Chen, K.~M. Ting, T.~Washio, Y.~Zhu, Local contrast as an effective means to
  robust clustering against varying densities, Machine Learning (2018) 1--25.

\bibitem{comaniciu2002mean}
D.~Comaniciu, P.~Meer, Mean shift: A robust approach toward feature space
  analysis, IEEE Transactions on Pattern Analysis and Machine Intelligence
  24~(5) (2002) 603--619.

\bibitem{LU20131227}
Y.~Lu, Y.~Wan, {PHA}: A fast potential-based hierarchical agglomerative
  clustering method, Pattern Recognition 46~(5) (2013) 1227 -- 1239.

\bibitem{Campello:2015:HDE}
R.~J. G.~B. Campello, D.~Moulavi, A.~Zimek, J.~Sander, Hierarchical density
  estimates for data clustering, visualization, and outlier detection, ACM
  Transactions on Knowledge Discovery from Data 10~(1) (2015) 5:1--5:51.

\bibitem{Chen11}
W.-Y. Chen, Y.~Song, H.~Bai, C.-J. Lin, E.~Y. Chang, Parallel spectral
  clustering in distributed systems, IEEE Transactions on Pattern Analysis and
  Machine Intelligence 33~(3) (2011) 568--586.

\bibitem{GILPIN201795}
S.~Gilpin, I.~Davidson, A flexible {ILP} formulation for hierarchical
  clustering, Artificial Intelligence 244 (2017) 95 -- 109.

\bibitem{huang2012revealing}
J.~Huang, H.~Sun, Q.~Song, H.~Deng, J.~Han, Revealing density-based clustering
  structure from the core-connected tree of a network, IEEE transactions on
  knowledge and data engineering 25~(8) (2012) 1876--1889.

\bibitem{bryant2018rnn}
A.~Bryant, K.~Cios, Rnn-dbscan: A density-based clustering algorithm using
  reverse nearest neighbor density estimates, IEEE Transactions on Knowledge
  and Data Engineering 30~(6) (2018) 1109--1121.

\bibitem{wang2009novel}
X.-F. Wang, D.-S. Huang, A novel density-based clustering framework by using
  level set method, IEEE Transactions on knowledge and data engineering 21~(11)
  (2009) 1515--1531.

\bibitem{carreira2015review}
M.~A. Carreira-Perpin{\'a}n, A review of mean-shift algorithms for clustering,
  arXiv preprint arXiv:1503.00687.

\bibitem{XIE201619}
J.~Xie, H.~Gao, W.~Xie, X.~Liu, P.~W. Grant, Robust clustering by detecting
  density peaks and assigning points based on fuzzy weighted k-nearest
  neighbors, Information Sciences 354 (2016) 19 -- 40.

\bibitem{XU2016200}
J.~Xu, G.~Wang, W.~Deng, {DenPEHC}: Density peak based efficient hierarchical
  clustering, Information Sciences 373 (2016) 200 -- 218.

\bibitem{7882646}
Q.~Zhang, C.~Zhu, L.~T. Yang, Z.~Chen, L.~Zhao, P.~Li, An incremental {CFS}
  algorithm for clustering large data in industrial internet of things, IEEE
  Transactions on Industrial Informatics 13~(3) (2017) 1193--1201.

\bibitem{d2018automatic}
M.~d'Errico, E.~Facco, A.~Laio, A.~Rodriguez, Automatic topography of
  high-dimensional data sets by non-parametric density peak clustering, arXiv
  preprint arXiv:1802.10549.

\bibitem{botev2010kernel}
Z.~I. Botev, J.~F. Grotowski, D.~P. Kroese, et~al., Kernel density estimation
  via diffusion, The annals of Statistics 38~(5) (2010) 2916--2957.

\bibitem{strehl2002cluster}
A.~Strehl, J.~Ghosh, Cluster ensembles---a knowledge reuse framework for
  combining multiple partitions, Journal of Machine Learning Research 3~(Dec)
  (2002) 583--617.

\bibitem{kuhn1955hungarian}
H.~W. Kuhn, The hungarian method for the assignment problem, Naval Research
  Logistics Quarterly 2~(1-2) (1955) 83--97.

\bibitem{chang2008robust}
H.~Chang, D.-Y. Yeung, Robust path-based spectral clustering, Pattern
  Recognition 41~(1) (2008) 191--203.

\bibitem{zahn1971graph}
C.~T. Zahn, Graph-theoretical methods for detecting and describing gestalt
  clusters, IEEE Transactions on computers 100~(1) (1971) 68--86.

\bibitem{Muller2009}
E.~M{\"u}ller, S.~G{\"u}nnemann, I.~Assent, T.~Seidl, Evaluating clustering in
  subspace projections of high dimensional data, in: Proceedings of the VLDB
  Endowment, Vol.~2, 2009, pp. 1270--1281.

\bibitem{li2016feature}
J.~Li, K.~Cheng, S.~Wang, F.~Morstatter, T.~Robert, J.~Tang, H.~Liu, Feature
  selection: A data perspective, arXiv:1601.07996.

\bibitem{Lichman:2013}
M.~Lichman, \href{http://archive.ics.uci.edu/ml}{{UCI Machine Learning
  Repository}} (2013).
\newline\urlprefix\url{http://archive.ics.uci.edu/ml}

\bibitem{demvsar2006statistical}
J.~Dem{\v{s}}ar, Statistical comparisons of classifiers over multiple data
  sets, Journal of Machine Learning Research 7 (2006) 1--30.

\bibitem{Day1984}
W.~H.~E. Day, H.~Edelsbrunner, Efficient algorithms for agglomerative
  hierarchical clustering methods, Journal of Classification 1~(1) (1984)
  7--24.

\bibitem{Sibson1973}
R.~Sibson, {SLINK}: An optimally efficient algorithm for the single-link
  cluster method, The Computer Journal 16~(1) (1973) 30--34.

\bibitem{SLINK1977}
D.~Defays, An efficient algorithm for a complete link method, The Computer
  Journal 20~(4) (1977) 364--366.

\bibitem{781637}
G.~Karypis, E.-H. Han, V.~Kumar, Chameleon: hierarchical clustering using
  dynamic modeling, Computer 32~(8) (1999) 68--75.

\end{thebibliography}

\end{document}